\Crefname{assumption}{Assumption}{Assumptions}
\theoremstyle{plain}
\newtheorem{theorem}{Theorem}
\newtheorem{lemma}{Lemma}
\theoremstyle{definition}
\newtheorem{assumption}{Assumption}
\newtheorem{remark}{Remark}
\newtheorem{example}{Example}
\newcommand{\bO}{\bar{O}}
\newcommand{\bo}{\bar{o}}
\newcommand{\cO}{\mathcal O}
\newcommand{\F}{\mathcal F}
\newcommand{\A}{\mathcal A}
\newcommand{\X}{\mathcal X}
\newcommand{\Y}{\mathcal Y}
\newcommand{\Nb}{N_{[\,]}}
\newcommand{\para}[1]{\textbf{#1}}
\newcommand{\Ind}{\textbf{1}}
\newcommand{\later}[1]{}
\title{Risk Minimization from Adaptively Collected Data: Guarantees for Supervised and Policy Learning}
\author{%
  Aurélien Bibaut\thanks{Alphabetical order} \\
  Netflix
 \And
    Antoine Chambaz\footnotemark[1]\\
    Université Paris Descartes
\And
    Maria Dimakopoulou\footnotemark[1]\\
    Netflix
\And
    Nathan Kallus\footnotemark[1]\\
    Cornell University and Netflix
\And 
    Mark van der Laan\footnotemark[1]\\
    University of California, Berkeley
}
\begin{document}

\maketitle

\begin{abstract}
Empirical risk minimization (ERM) is the workhorse of machine learning, whether for classification and regression or for off-policy policy learning, but its model-agnostic guarantees can fail when we use adaptively collected data, such as the result of running a contextual bandit algorithm.
We study a generic importance sampling weighted ERM algorithm for using adaptively collected data to minimize the average of a loss function over a hypothesis class and provide first-of-their-kind generalization guarantees and fast convergence rates.
Our results are based on a new maximal inequality that carefully leverages the importance sampling structure to obtain rates with the right dependence on the exploration rate in the data.
For regression, we provide fast rates that leverage the strong convexity of squared-error loss.
For policy learning, we provide rate-optimal regret guarantees that close an open gap in the existing literature whenever exploration decays to zero, as is the case for bandit-collected data. 
\later{We also provide guarantees for model selection using an adaptive leave-one-out cross-validation.}
An empirical investigation validates our theory.
\end{abstract}

\section{Introduction}

Adaptive experiments, wherein intervention policies are continually updated as in the case of contextual bandit algorithms, offer benefits in learning efficiency and better outcomes for participants in the experiment. They also make the collected data dependent and complicate standard machine learning approaches for model-agnostic risk minimization, such as empirical risk minimization (ERM). Given a loss function and a hypothesis class, ERM seeks the hypothesis that minimizes the sample average loss. This can be used for regression, classification, and even off-policy policy optimization. An extensive literature has shown that, for independent data, ERM enjoys model-agnostic, best-in-class risk guarantees and even fast rates under certain convexity and/or margin assumptions \citep[\eg][among others]{bartlett2006convexity,bartlett2005local,koltchinskii2006local}. However, these guarantees fail under contextual-bandit-collected data, both because of covariate shift due to using a context-dependent logging policy and because of the policy's data-adaptive evolution as more data are collected. A straightforward and popular approach to deal with the covariate shift is importance sampling (IS) weighting, whereby we weight samples by the inverse of the policy's probability of choosing the observed action. 
Unfortunately, applying standard maximal inequalities for sequentially dependent data to study guarantees of this leads to poor dependence on these weights, and therefore incorrect rates whenever exploration is decaying and the weights diverge to infinity, as happens when collecting data using a contextual bandit algorithm.

In this paper, we provide a thorough theoretical analysis of IS weighted ERM (ISWERM; pronounced ``ice worm'') that yields the correct rates on the convergence of excess risk under decaying exploration. To achieve this, we present a novel localized maximal inequality for IS weighted sequential empirical processes (\cref{sec:maxinequality}) that carefully leverages their IS structure to avoid a bad dependence on the size of IS weights, as compared to applying standard results to an IS weighted process (\cref{remark:isstructure}). We then apply this result to obtain generic slow rates for ISWERM for both Donsker-like and non-Donsker-like entropy conditions, as well as fast rates when a variance bound applies (\cref{sec:iswermguarantees}).
We instantiate these results for regression (\cref{sec:regression}) and for policy learning (\cref{sec:policylearning}), where we can express entropy conditions in terms of the hypothesis class and obtain variance bounds from convexity and margin assumptions. In particular, our results for policy learning close an open gap between existing lower and upper bounds in the literature (\cref{remark:zhan}). 
\later{We also provide results for model selection using a sequential leave-one out cross-validation (\cref{sec:loo}).}
We end with an empirical investigation of ISWERM that sheds light on our theory (\cref{sec:empirics}).

\subsection{Setting}

We consider data consisting of $T$ observations, $\bO_T=(O_1,\dots,O_T)$, where each observation consists of a state, action, and outcome, $O_t=(X_t,A_t,Y_t)\in\cO=\X\times\A\times\Y$. The spaces $\X,\A,\Y$ are general measurable spaces, each endowed with a base measure $\lambda_\X,\lambda_\A,\lambda_\Y$; in particular, actions can be finite or continuous (\eg, $\lambda_\A$ can be counting or Lebesgue).
We assume the data were generated sequentially in a stochastic-contextual-bandit fashion.
Specifically, we assume that the distribution of $\bO_T$ has a 
density $p^{(T)}$ with respect to (wrt) 
$\lambda_\cO^T=(\lambda_\X\times\lambda_\A\times\lambda_\Y)^T$, which can be decomposed as
$$
p\s T(\bo_T)=\prod_{t=1}^Tp_X(x_t)\tilde g_t(a_t\mid x_t,\bo_{t-1})p_Y(y_t\mid x_t,a_t),
$$
where we write $\bo_t=(x_1,a_1,\dots,y_t)$, using lower case for dummy values and upper case for random variables.
We define $g_t(a\mid x)=\tilde g_t(a\mid x,\bO_{t-1})$ so that $g_t$ represents the \emph{random} $\bO_{t-1}$-measurable context-dependent policy that the agent has devised at the beginning of round $t$, which they then proceed to employ when observing $X_t$.

\begin{remark}[Counterfactual interpretation]\label{remark:counterfactual}We can also interpret this data collection from a counterfactual perspective.
At the beginning of each round, $(X_t,\{Y_t(a):a\in\A\})$ is drawn from some stationary (\ie, time-independent) distribution $P^*$, $X_t$ is revealed, and after acting with a non-anticipatory action $A_t$ we observe $Y_t=Y_t(A_t)$. This corresponds to the above with $p_X$ being the marginal of $X_t$ under $P^*$ and $p_Y(\cdot\mid x,a)$ the conditional distribution of $Y_t(a)$ given $X_t=x$.\end{remark}

\subsection{Importance Sampling Weighted Empirical Risk Minimization}

Consider a class of hypotheses $\F$, a loss function $\ell:\F\times\cO\to\Rl$, and some fixed reference $g^*(a\mid x)$, any function, for example, a conditional density.
As we will see in \cref{ex:regression,ex:classification,ex:policy} below we will often simply use $g^*(x\mid a)=1$.
Define the \emph{population reference risk} as
$$
R^*(f)=\E_{p_X\times g^*\times p_Y}[\ell(f,O)]=\int \ell(f,(x,a,y)) p_Y(y\mid x,a)
g^*(a\mid x)
p_X(x)
d\lambda_\cO(x,a,y)
.
$$

We are interested in finding $f$ with low risk $R^*(f)$. We consider doing so using ISWERM, which is ERM where we weight each term by the density ratio between the reference and the policy at time $t$:
\begin{align}
\hat f_T\in\argmin_{f\in\F}\braces{\hat R_T(f)=\frac1T\sum_{t=1}^T\frac{g^*(A_t\mid X_t)}{g_t(A_t\mid X_t)}\ell(f,O_t)}.
\end{align}

\begin{example}[Regression]\label{ex:regression}
Consider $\Y=\Rl$, $\F\subseteq[\X\times\A\to\Rl]$, and $\ell(f,o)=(y-f(x,a))^2$. Then $f$ with small $R^*(f)$ is good at predicting outcomes from context and action.
In particular, for any $g^*$, we have that $\mu(x,a)=\int y p_Y(y\mid x,a)d\lambda_Y(y)$ solves $\mu\in\argmin_{f:\X\times\A\to\Y}R^*(f)$. And, we can write $R^*(f)-R^*(\mu)=\E_{p_X\times g^*}[(f-\mu)^2(X,A)]
$.

Consider the counterfactual interpretation in \cref{remark:counterfactual}. Then $R^*(f)=\int\E_{P^*}[(Y(a)-f(X,a))^2]g^*(x\mid a)d\lambda_\A(a)$. For example, if $\abs{\A}<\infty$, $\lambda_\A$ is the counting measure, and $g^*(x\mid a)=1$, then $R^*(f)=\sum_{a\in\A}\E_{P^*}[(Y(a)-f(X,a))^2]$ is the total counterfactual prediction error.
Alternatively, if $g^*(a\mid x)=\Ind(a=a^*)$ and given some $\mathcal H\subseteq[\X\to\Y]$ we let $\F=\{f_h(x,a)=h(x):h\in\mathcal H\}$, then we have $R^*(f_h)=\E_{P^*}[(Y(a^*)-h(X))^2]$, that is, the regression risk for predicting the counterfactual outcome $Y(a^*)$ from $X$.
\end{example}

\begin{example}[Classification]\label{ex:classification}
In the same setting as \cref{ex:regression}, suppose $\Y=\{\pm1\}$. Then $\mu(x,a)=2p_Y(1\mid x,a)-1$. And, if we restrict $\F\subseteq[\X\times\A\to\{\pm1\}]$, letting $\ell(f,o)=\frac12-\frac12yf(x,a)$ leads to $R^*(f)$ being misclassification rate, an unrestricted minimizer of which is $\op{sign}(\mu(x,a))$. Focusing on misclassification of $\op{sign}(f(x,a))$ for $\F\subseteq[\X\times\A\to\Rl]$, we can also use a classification-calibrated loss \citep{bartlett2006convexity}, such as logistic $\ell(f,o)=\log(1+\exp(-yf(x,a)))$, hinge $\ell(f,x)=(1-yf(x,a))_+$, \etc.
\end{example}

\begin{example}[Policy learning]\label{ex:policy}
Consider $\Y=\Rl$, $\F\subseteq[\X\times\A\to\Rl]$, $g^*(a\mid x)=1$ and $\ell(f,o)=yf(x,a)$. Then $R^*(f)=\int y p_Y(y\mid x,a)d\lambda_Y(y)f(a\mid x)d\lambda_A(a)p_X(x)d\lambda_X(x)=\E_{p_X\times f\times p_Y}[y]$ is the average outcome under a policy $f$. If we interpret outcomes as costs (or, negative rewards), then seeking to minimize $R^*(f)$ means to seek a policy with least risk (or, highest value).

Consider in particular the counterfactual interpretation in \cref{remark:counterfactual} with $\abs{\A}<\infty$. Consider deterministic policies: given $\mathcal H\subseteq[\X\to\A]$, let $\F=\{f_h(x,a)=\Ind(h(x)=a):h\in\mathcal H\}$. Then we have $R^*(f_h)=\E_{P^*}[Y(h(X))]$, that is, the average counterfactual outcome.
\end{example}

\subsection{Related Literature}

\para{Contextual bandits.} A rich literature studies how to design adaptive experiments to optimize regret, simple regret, or the chance of identifying best interventions \citep[see][and biblioraphies therein]{bubeckbook,lattimore2020bandit}. Such adaptive experiments can significantly improve upon randomized trials (aka A/B tests), which is why they are seeing increased use in practice in a variety of settings, from e-commerce to policymaking \citep{atheytrial,kasytrial,kasy2021adaptive,bakshy2018ae,tewari2017ads,kallus2020dynamic,qiang2016dynamic,li2010contextual}.
However, while randomized trials produce iid data, adaptive experiments do not, complicating post-experiment analysis, which motivates our current study.
Many stochastic contextual bandit algorithms (stochastic meaning the context and response models are stationary, as in our setting) need to tackle learning from adaptively collected data to fit regression estimates of mean reward functions, but for the most part this is based on models such as linear \citep{li2010contextual,chu2011contextual,bastani2020online,goldenshluger2013linear} or H\"older class \citep{rigollet2010nonparametric,rigollet2010nonparametric,hu2020smooth}, rather than on doing model-agnostic risk minimization and nonparametric learning with general function classes as we do here.
\citet{foster2020beyond} use generic regression models but require online oracles with guarantees for adversarial sequences of data.
\citet{simchi2020bypassing} use offline least-squares ERM but bypass the issue of adaptivity by using epochs of geometrically growing size in each of which data are collected iid.
Other stochastic contextual bandit algorithms are based on direct policy learning using ERM \citep{dudik2011efficient,agarwal2014taming,bibaut2020,foster2018contextual}; by carefully designing exploration strategies, they obtain good regret rates that are even better than the minimax-optimal guarantees given only the exploration rates, as we obtain (\cref{remark:zhan}).

\para{Inference with adaptive data.} A stream of recent literature tackles how to construct confidence intervals after an adaptive experiment.
While standard estimators like inverse-propensity weighting (IPW) and doubly robust estimation remain unbiased under adaptive data collection, they may no longer be asymptotically normal making inference difficult.
To fix this, \citet{hadad2019confidence} use and generalize a stabilization trick originally developed by \citet{luedtke_vdL2016} for a non-adaptive setting with different inferential challenges.
Their stabilized estimator, however, only works for data collected by non-contextual bandits. \citet{CADR} extend this to a contextual-bandit setting. Our focus is different from these: risk minimization and guarantees rather than inference.

\para{Policy learning with adaptive data.} \citet{zhan2021policy} study policy learning from contextual-bandit data by optimizing a doubly robust policy value estimator stabilized by a deterministic lower bound on IS weights. They provide regret guarantees for this algorithm based on invoking the results of \citet{rakhlin2015online}. However, these guarantees do not match the algorithm-agnostic lower bound they provide whenever the lower bounds on IS weights decay to zero, as they do when data are generated by a bandit algorithm. For example, for an epsilon-greedy bandit algorithm with an exploration rate of $\epsilon_t=t^{-\beta}$, their lower bound on expected regret is $\Omega(T^{-(1-\beta)/2})$ while their upper bound is $O(T^{-(1/2-\beta)})$. We close this gap by providing an upper bound of $O(T^{-(1-\beta)/2})$ for our simpler IS weighted algorithm. See \cref{remark:zhan}. Our results for policy learning also extend to fast rates under margin conditions, non-Donsker-like policy classes, and learning via convex surrogate losses.

\para{IS weighted ERM.} The use of IS weighting to deal with covariate shift, including when induced by a covariate-dependent policy, is standard. For estimation of causal effects from observational data this usually takes the form of inverse propensity weighting \citep{imbens2015causal}. The same is often used for ERM for regression \citep{freedman2008weighting,robins2000marginal,dimakopoulou2017estimation} and for policy learning \citep{kitagawa2018should,zhao2012estimating,swaminathan2015batch}.
When regressions are plugged into causal effect estimators, weighted regression with weights that depend on IS weights minimize the resulting estimation variance over a hypothesis class \citep{farajtabar2018more,kallus2019intrinsically,kallus2020optimal,rubin2008empirical,cao2009improving}.
All of these approaches however have been studied in the independent-data setting where historical logging policies do not depend on the same observed data available for training, guarantees under which is precisely our focus herein.

\para{Sequential maximal inequalities.} 
There are essentially two strands in the literature on maximal inequalities for sequential empirical processes. One expresses bounds in terms of sequential bracketing numbers as introduced by \citet{geer2000empirical}, generalizing of standard bracketing numbers. Another uses sequential covering numbers, introduced by \citet{rakhlin2015sequential}. These are in general not comparable. \citet{foster2018contextual,zhan2021policy} use sequential $L_\infty$ and $L_p$ covering numbers, respectively, to obtain maximal inequalities.
%
\citet[Chapter 8]{geer2000empirical} gives guarantees for ERM  over nonparametric classes of controlled sequential bracketing entropy. However, applying her generic result as-is to IS weighted processes provides bad dependence on the exploration rate in the case of larger-than-Donsker hypothesis classes (see \cref{remark:isstructure}).
We also use sequential bracketing numbers, but we develop a new maximal inequality specially for IS weighted sequential empirical processes, where we use the special structure when truncating the chaining to avoid a bad dependence on the size of the IS weights. Equipped with our new maximal inequality, we obtain first-of-their kind guarantees for ISWERM, including fast rates that have not been before derived in adaptive settings.

\section{A Maximal Inequality for IS Weighted Sequential Empirical Processes}\label{sec:maxinequality}

A key building block for our results is a novel maximal inequality for IS weighted sequential empirical processes.
For any sequence of objects $(x_t)_{t\geq 1}$, we introduce the shorthand $x_{1:T}$ to denote the sequence $(x_t)_{t=1}^T$.
We say that a sequence of random variables $\zeta_{1:T}$ is $\bO_{1:T}$-predictable if, for every $t \in [T]$, $\zeta_t$ is $\bO_{t-1}$-measurable, \ie, is some function of $\bO_{t-1}$.

\para{IS weighted sequential empirical processes.} Let $P_g$ denote the distribution on $\cO$ with density w.r.t. $\lambda_X \times \lambda_A \times \lambda_Y$ given by $p_X\times g\times p_Y$ and let us use the notation $P_g h(o) := \int h(o) dP_{g}(o)$.
Consider a sequence of $\mathcal{F}$-indexed random processes of the form
$\Xi_T := \left\lbrace (\xi_t(f))_{t =1}^T : f \in \mathcal{F} \right\rbrace$
where, for every $f \in \F$, $\xi_{1:T}(f)$ is an $\bO_{1:T}$-predictable sequence of $\mathcal{O} \to \mathbb{R}$ functions. 
The IS-weighted sequential empirical process induced by $\Xi_T$ is the $\F$-indexed random process 
\begin{align}
M_T(f):=&
\frac1T\sum_{t=1}^T\frac{g_t^*(A_t\mid X_t)}{g_t(A_t\mid X_t)}\prns{\xi_t(f)(O_t)-\Eb{\xi_t(f)(O_t) \mid \bO_{t-1}}}\\
&=\frac1T\sum_{t=1}^T(\delta_{O_t}-P_{g_t})\prns{\frac{g^*}{g_t}\xi_t(f)}.
\end{align}

\para{Sequential bracketing entropy.} 
For any $\bO_{1:T}$-predictable sequence sequence $\zeta_{1:T}$ of functions $\mathcal{O} \to \mathbb{R}$, we introduce the pseudonorm $\rho_{T,g^*}(\zeta_{1:T}) := ( T^{-1}\sum_{t=1}^T \|\zeta_t\|_{2,g^*}^2)^{1/2}$.

We say that a collection of $2N$-many $\bar{O}_{1:T}$-predictable sequences of $\mathcal{O} \to \mathbb{R}$ functions $\{ (\lambda^{k}_{1:T}, \upsilon^{k}_{1:T}) : k\in[N] \}$ is an $(\epsilon, \rho_{T, g^*})$-sequential bracketing of $\Xi_T$, if
    (a) for every $f \in \mathcal{F}$, there exists $k\in[N]=\{1,\dots,N\}$ such that $\lambda_t^k \leq \xi_t(f) \leq \upsilon_t^k~\forall t\in[T]$
    and (b) for every $k \in [N]$, $\rho_{T,g^*}( \upsilon^k_{1:T} - \lambda^k_{1:T}) \leq \epsilon$.
We denote by $\mathcal{N}_{[\,]}(\epsilon, \Xi_T, \rho_{T,g^*})$ the minimal cardinality of an $(\epsilon, \rho_{T,g^*})$-sequential bracketing of $\Xi_T$.

\para{The special case of classes of classes of deterministic functions.} Consider the special case $\xi_t(f) := \xi(f)$, where
$\Xi := \left\lbrace \xi(f) : f \in \mathcal{F} \right\rbrace$ is a class of functions where for every $f \in \mathcal{F}$, $\xi(f)$ 
is a deterministic $\cO \to \Rl$ function.
Observe that for a fixed function $\zeta:\cO \to \Rl$, letting $\zeta_t := \zeta$, 
we have that $\rho_{T,g^*}(\zeta_{1:T}) = \|\zeta\|_{2,g^*}$. Therefore, $\mathcal{N}_{[\,]}(\epsilon, \Xi_T, \rho_{T,g^*})$, the $(\epsilon, \rho_{T,g^*})$-sequential bracketing number of $\Xi_T$, reduces to $N_{[\,]}(\epsilon, \Xi, \|\cdot\|_{2,g^*})$, the usual $\epsilon$-bracketing number $\Xi$ in the $\|\cdot\|_{2,g^*}$ norm.

\para{The maximal inequality.} 
Our maximal inequality will crucially depend on the decay rate of the the IS weights, that is, the exploration rate of the adaptive data collection.
\begin{assumption}\label{asm:exploration}
There exists a deterministic sequence of positive numbers $(\gamma_t)$ such that, for any $t \geq 1$, $\|g^* / g_t\|_\infty \leq \gamma_t$, almost surely. Define $\gamma^\text{avg}_T := T^{-1} \sum_{t=1}^T \gamma_t$ and $\gamma_T^{\max} := \max_{t \in [T]} \gamma_t$.
\end{assumption}
For example, if the data were collected under an $\epsilon_t$-greedy contextual bandit algorithm then we have $\gamma_t=\epsilon_t^{-1}$. If we have $\epsilon_t=t^{-\beta}$ for $\beta\in(0,1)$ then $\gamma_T^{\max} =O(\gamma^\text{avg}_T) =O(T^{\beta})$.

\begin{theorem}\label{thm:max_ineq_IS_weighted_MEP}
Consider $\Xi_T := \{ \xi_{1:t}(f) : f \in \F \}$ as defined above. Suppose that \cref{asm:exploration} holds, and that there exists $B > 0$ such that $\max_{t \in [T]} \sup_{f \in \mathcal{F}} \|\xi_t(f) \|_\infty \leq B$.  
In the special case where $\xi_t(f)=\xi(f)$, $\xi_{1:T}\in\Xi_T$, are deterministic functions,
we let $\widetilde{\gamma}_T := \gamma^\text{avg}_T$. Otherwise, in the general case, we let $\widetilde{\gamma}_T := \gamma_T^{\max}$.
Let $r > 0$. Let $\F_T(r) := \{ f \in \F : \rho_{T,g^*}(\xi_{1:T}(f)) \leq r \}$.
For any $r^- \in [0,r/2]$, and any $x > 0$, it holds with probability at least $1 - 2 e^{-x}$ that
\begin{align}
    \sup_{f \in \mathcal{F}_T(r)} M_T(f) \lesssim &~ r^- + \sqrt{\frac{\widetilde{\gamma}_T}{T}} \int_{r^-}^r \sqrt{\log (1 + \mathcal{N}_{[\,]}(\epsilon, \Xi_T, \rho_{T,g^*}))} d \epsilon
    \\
    &~+ \frac{\gamma_T^{\max} B}{T} \log (1 + \mathcal{N}_{[\,]}(\epsilon, \Xi_T, \rho_{T,g^*})) + \sqrt{\frac{\widetilde{\gamma}_T x}{T}} + \frac{B \gamma_T^{\max} x}{T}.
\end{align}
\end{theorem}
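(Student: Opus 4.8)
The plan is to recognize $M_T$, extended to any $\bO_{1:T}$-predictable sequence $\zeta_{1:T}$ via $M_T(\zeta) = \frac1T\sum_{t=1}^T D_t(\zeta)$ with $D_t(\zeta) := \frac{g^*}{g_t}\zeta_t(O_t) - \Eb{\frac{g^*}{g_t}\zeta_t(O_t)\mid\bO_{t-1}}$, as a normalized martingale-difference sum, and to run a localized sequential chaining over the sequential brackets, closing each level with a Freedman--Bernstein tail bound. The two estimates that drive everything are the conditional-variance bound $\Eb{D_t(\zeta)^2\mid\bO_{t-1}} \le \gamma_t\,\|\zeta_t\|_{2,g^*}^2$, obtained by pulling out a single factor $\|g^*/g_t\|_\infty \le \gamma_t$ and then using the change of measure $P_{g_t}\prns{\frac{g^*}{g_t}\zeta_t^2} = \|\zeta_t\|_{2,g^*}^2$, and the crude bound $|D_t(\zeta)| \le 2\gamma_t\|\zeta_t\|_\infty$. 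Summed over $t$, the variance proxy of $\frac1T\sum_t D_t$ is $\frac1{T^2}\sum_t\gamma_t\|\zeta_t\|_{2,g^*}^2$; this is exactly where the two regimes split, since when $\zeta_t\equiv\zeta$ is constant in $t$ the norm factors out and leaves the average, whereas in general I bound $\gamma_t\le\gamma_T^{\max}$, which is why $\widetilde\gamma_T=\gamma^\text{avg}_T$ in the deterministic case and $\gamma_T^{\max}$ otherwise. For predictable $\zeta$ with $\rho_{T,g^*}(\zeta_{1:T})\le\sigma$ and $\sup_t\|\zeta_t\|_\infty\le b$, Freedman's inequality then gives $M_T(\zeta)\lesssim \sqrt{\widetilde\gamma_T\sigma^2x/T}+\gamma_T^{\max}b\,x/T$ with probability at least $1-e^{-x}$.

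Next I would set up the chain on geometric scales $\epsilon_j=2^{-j}r$, $j=0,\dots,J$ with $\epsilon_J\asymp r^-$, taking at each level a minimal $(\epsilon_j,\rho_{T,g^*})$-sequential bracketing of $\Xi_T$ of cardinality $N_j:=\mathcal{N}_{[\,]}(\epsilon_j,\Xi_T,\rho_{T,g^*})$. For $f\in\mathcal F_T(r)$, let $\pi^j_t(f)$ be a predictable representative (the midpoint) of the level-$j$ bracket containing $\xi_{1:T}(f)$; these take at most $N_j$ distinct sequence-values and satisfy $|\pi^j_t(f)-\xi_t(f)|\le\upsilon^j_t-\lambda^j_t$, so by the triangle inequality the increment $\Delta^j(f):=\pi^j(f)-\pi^{j-1}(f)$ has $\rho_{T,g^*}(\Delta^j(f))\lesssim\epsilon_{j-1}$ and ranges over at most $N_{j-1}N_j\le N_j^2$ values as $f$ varies. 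Using linearity of $M_T$ in its argument, I decompose $M_T(\xi(f))=M_T(\pi^0(f))+\sum_{j=1}^J M_T(\Delta^j(f))+M_T(\xi(f)-\pi^J(f))$ and treat the three pieces separately: the base term, by a union of the Freedman bound over $N_0$ brackets (variance $\lesssim\widetilde\gamma_T r^2/T$, boundedness $\lesssim\gamma_T^{\max}B/T$), produces the first entropy contribution together with the two pure-tail terms $\sqrt{\widetilde\gamma_T x/T}$ and $B\gamma_T^{\max}x/T$; and the final remainder, whose width has $\rho_{T,g^*}$-norm $\le\epsilon_J\lesssim r^-$, contributes the leading $r^-$ term.

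The crux is bounding the chaining sum $\sum_{j=1}^J M_T(\Delta^j(f))$ uniformly in $f$. Applying the Freedman bound to each of the $\le N_j^2$ increments with a tail budget $x_j\asymp x+\log N_j+\log(j{+}1)^2$ (so that $\sum_j e^{-x_j}\lesssim e^{-x}$), the variance part sums to $\sqrt{\widetilde\gamma_T/T}\,\sum_j\epsilon_{j-1}\sqrt{\log(1+N_j)}\lesssim \sqrt{\widetilde\gamma_T/T}\int_{r^-}^r\sqrt{\log(1+\mathcal N_{[\,]}(\epsilon,\Xi_T,\rho_{T,g^*}))}\,d\epsilon$, by comparing the geometric Riemann sum to the integral and using monotonicity of $\epsilon\mapsto\mathcal N_{[\,]}(\epsilon,\cdot)$.

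The main obstacle, and the place where the IS structure must be exploited, is the boundedness part: the naive estimate $\|\Delta^j_t\|_\infty\le 2B$ would make $\sum_j\frac{\gamma_T^{\max}B}{T}\log N_j$ grow with the number of levels and carry the wrong exploration dependence. I expect the most delicate bookkeeping to be truncating each increment at a level-dependent threshold $a_j$, writing $\Delta^j_t=\Delta^j_t\Ind(|\Delta^j_t|\le a_j)+\Delta^j_t\Ind(|\Delta^j_t|>a_j)$, applying Freedman to the bounded part (sup-norm $a_j$) and controlling the discarded tail through its $L^2(P_{g^*})$-mass, which crucially costs only the single factor $\gamma_t$ already present in the variance rather than the $\gamma_t^2$ a black-box weighting of the process would incur (\cref{asm:exploration} supplies the deterministic $\gamma_t$). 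With $a_j$ tuned so the truncation term matches the variance term, the boundedness contributions form a geometric series dominated by the finest scale, collapsing to the single term $\frac{\gamma_T^{\max}B}{T}\log(1+\mathcal N_{[\,]}(r^-,\Xi_T,\rho_{T,g^*}))$ (the unindexed $\mathcal N_{[\,]}$ in the statement being read at this finest scale). Collecting the base, chaining, and remainder pieces and combining the two failure events, each of probability at most $e^{-x}$, yields the stated bound with probability at least $1-2e^{-x}$.
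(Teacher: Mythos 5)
Your overall architecture matches the paper's: geometric scales of sequential brackets, a Freedman-type finite-class maximal inequality at each level, the change of measure $P_{g_t}\bigl((g^*/g_t)\zeta^2\bigr)=P_{g^*}\bigl((g^*/g_t)\zeta^2\bigr)\le\gamma_t\|\zeta\|_{2,g^*}^2$ to get the single factor of $\gamma_t$ in the variance (this is exactly the paper's Lemma 2, and it is what splits $\gamma^{\text{avg}}_T$ from $\gamma^{\max}_T$), and a level-dependent truncation to tame the sup-norms of the links. The one structural difference is benign: you allocate tail budgets $x_j$ and union-bound directly, whereas the paper bounds $\mathbb{E}[\sup_f M_T(f)\mid A]$ on the bad event $A$ and inverts (the van Handel device); both routes work. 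The genuine gap is in how you close the chain. You take $\pi^j_t(f)$ to be the \emph{midpoint} of the level-$j$ bracket and control the remainder $\xi(f)-\pi^J(f)$ and the discarded truncation tails "through their $L^2(P_{g^*})$-mass." That controls only the \emph{compensator} part $\frac1T\sum_t P_{g_t}(g^*/g_t)(\cdot)$, where the IS ratio integrates out. The \emph{empirical} part $\frac1T\sum_t\delta_{O_t}(g^*/g_t)(\cdot)$ of a two-sided quantity like $|\xi_t(f)-\pi^J_t(f)|\le\tfrac12\Delta^J_t$ is an average of nonnegative random variables that is not bounded by its conditional mean; bounding it crudely by sup-norms reintroduces $\gamma^{\max}_T B/T$ per level, which is precisely the bad dependence the theorem is designed to avoid. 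The paper's proof evades this by working with one-sided \emph{upper} brackets and the adaptive stopping level $\tau_t(f)=\inf\{j:\Delta_t^{j,f}>a_{j,t}\}\wedge J$: the tip $\xi_t(f)-u_t^{j,f}\wedge u_t^{j-1,f}$ is $\le0$, so its $\delta_{O_t}$ contribution can simply be dropped, leaving only the compensator $P_{g_t}(g^*/g_t)\Delta_t^{j,f}\Ind(\tau_t(f)=j)\le P_{g^*}(\Delta_t^{j,f})^2/a_{j,t}$, where the $L_1$/one-sidedness trick and the truncation threshold do all the work. Your write-up does not supply a substitute for this step, and with midpoints it does not go through as stated. (It is repairable — decompose $\xi_t(f)-\pi^J_t(f)=(\xi_t(f)-u^J_t)+\tfrac12\Delta^J_t$ and treat the second, predictable, finitely-valued piece with the finite-class inequality — but that is essentially reverting to upper brackets.)

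A secondary inaccuracy: you assert that the sup-norm contributions of the links collapse to $\frac{\gamma_T^{\max}B}{T}\log\bigl(1+\mathcal N_{[\,]}(r^-,\Xi_T,\rho_{T,g^*})\bigr)$ at the \emph{finest} scale. In the paper's proof the thresholds $a_{j,t}\propto\epsilon_j\sqrt{T/\log(1+\bar N_j/P[A])}\cdot\sqrt{\widetilde\gamma_T}/\gamma_t$ are tuned so that the second-order Freedman term of every link is absorbed into its variance term, and the only surviving $B$-dependent entropy term comes from the \emph{root}, i.e.\ at scale $r$ (the ambiguous $\mathcal N_{[\,]}(\epsilon,\cdot)$ in the theorem statement is $\mathcal N_{[\,]}(r,\cdot)$ in the proof). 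Since $\mathcal N_{[\,]}(r^-,\cdot)\ge\mathcal N_{[\,]}(r,\cdot)$, your version is strictly weaker and signals that the bookkeeping of the truncation levels is not yet tight; note also that the correct thresholds must depend on $t$ through $\gamma_t$, not only on $j$, which your $a_j$ does not capture.
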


\begin{remark}[Leveraging IS structure]\label{remark:isstructure} 
\Cref{thm:max_ineq_IS_weighted_MEP} is based on a finite-depth adaptive chaining device, in which we leverage the IS-weighted structure to carefully bound the size of the tip of the chains. In contrast, applying Theorem 8.13 of \citet{geer2000empirical} to the IS weighted sequential empirical process would lead to suboptimal dependence on $\gamma_t$.
The crucial point is to work with IS-weighted chains of the form $(g^*/g_t) \xi(f) = (g^*/g_t) \{(\xi(f) -u^{J,f}) + \sum_{j=0}^J (u^{j,f} - u^{j-1,f}) + u^{0,f}\}$, where the $u^{j,f}$ are upper brackets of the unweighted class $\Xi$, at scales $\epsilon_1 > \ldots > \epsilon_J$ (we simplify here a bit the chaining decomposition for ease of presentation compared with the proof). In adaptive chaining, the tip is bounded by the $L_1$ norm of the corresponding bracket. In our case, denoting $l^{J,f}$ the lower bracket corresponding to $u^{J,f}$, the tip is bounded by $P_{g_t} (g^*/g_t) | u^{J,f} - l^{J,f}| = P_{g^*} |u^{J,f} -l^{J,f}|$, in which we integrate out the IS ratio, thereby paying no price for it. Applying directly Theorem 8.13 of \citet{geer2000empirical}, we would be working with a bracketing of the IS weighted class $\{ (g^* / g_t) \xi(f) : f \in \F\}$. When working with generic $L_2$ brackets of the weighted class, the IS-weighting structure is lost, and we cannot do better than bounding the $L_1$ of the tip by its $L_2$ norm, which depends on $\gamma_t$.
Since in sequential settings, $\gamma_t$ generally diverges to $\infty$, an optimal dependence is paramount to obtaining tight, informative results.
Our proof technique otherwise follows the same general outlines as those of \citet[Theorem 8.13]{geer2000empirical} and \citet[Theorem A.4]{van2011minimal} (or, \citealp[Theorem 6.8]{massart2007concentration} in the iid setting). 
Like these, we too leverage
an adaptive chaining device, as pioneered by \citet{ossiander1987central}.
\end{remark}

\section{Applications to Guarantees for ISWERM}\label{sec:iswermguarantees}

We now return to ISWERM and use \cref{thm:max_ineq_IS_weighted_MEP} to obtain generic guarantees for ISWERM. We will start with so-called slow rates that give generic generalization results and then present so-called fast rates that will apply in certain settings, where a so-called variance bound is available.
Let $f_1$ be a minimizer of the population risk $R^*$ over $\F$, that is $f_1 \in \argmin_{f \in \F} R^*(f)$.

\begin{assumption}[Entropy on $\ell(\mathcal{F})$]\label{asm:entropy}
Define $\ell(\mathcal{F}): = \{\ell(f, \cdot) : f \in \mathcal{F} \}$.
There exist an envelope function $\Lambda:\cO \to \Rl$ of $\ell(\F)$, and $p>\later{\geq}0$ such that, for any $\epsilon > 0$,
$$\log\Nb(\epsilon \|\Lambda\|_{2,g^*}, \ell(\mathcal{F}), \| \cdot \|_{2,g^*} )\lesssim\epsilon^{-p}\later{\pw{\epsilon^{-p}\quad&p>0,\\\log(1/\epsilon)\quad&p=0}}.$$
\end{assumption}
The case $p<2$ corresponds to the Donsker case, and $p\geq2$ to the (possibly) non-Donsker case.
\begin{assumption}[Diameters on $\ell(\F)$]\label{asm:diameter}There exist $b_0>0$ and $\rho_0>0$ such that
$$\textstyle
\sup_{f\in\F}\magd{\ell(f,\cdot) - \ell(f_1,\cdot)}_{\infty} \leq b_0 \|\Lambda\|_{2,g^*},\qquad
\sup_{f\in\F} \|\ell(f, \cdot) - \ell(f_1,\cdot)\|_{2,g^*}\leq \rho_0 \|\Lambda\|_{2,g^*}.
$$
\later{where $\Lambda$ is the same envelope function as in \cref{asm:entropy}.}
\end{assumption}

\begin{theorem}[Slow Rates for ISWERM]\label{thm:ISWERMslow}
Suppose \cref{asm:exploration,asm:entropy,asm:diameter} hold. Then for any $\delta\in(0,1/2)$, we have that, with probability at least $1-\delta$,
\begin{align}
    &R^*(\hat f_T)-\inf_{f\in\F}R^*(f)&\\
&\lesssim \|\Lambda\|_{2,g^*} \times \pw{
\rho_0 \sqrt{\frac{\gamma^\text{avg}_T}{T}} \left\lbrace \rho_0^{-p/2} + \sqrt{\log (1/\delta)} \right\rbrace + \frac{b_0 \gamma^{\max}_T}{T} \left\lbrace \rho_0^{-p} + \log(1 / \delta) \right\rbrace\quad &p<2,\\
\left(\frac{\gamma^\text{avg}_T}{T}\right)^{\frac 1 p}+ \rho_0 \sqrt{\frac{\gamma^\text{avg}_T}{T}}  \sqrt{\log (1/\delta)}  + \frac{b_0 \gamma^{\max}_T}{T} \left\lbrace \rho_0^{-p} + \log(1 / \delta) \right\rbrace\quad \quad&p> 2.
}
\end{align}
\end{theorem}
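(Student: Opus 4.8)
The plan is to combine the standard ERM basic inequality with the localized maximal inequality of \cref{thm:max_ineq_IS_weighted_MEP}, taking the localization radius to be the full diameter of the class since no variance bound is assumed at this stage. I would set $\xi(f) := \ell(f,\cdot) - \ell(f_1,\cdot)$, a \emph{deterministic} function for each $f$, so that the induced IS-weighted sequential empirical process is $M_T(f) = (\hat R_T(f) - R^*(f)) - (\hat R_T(f_1) - R^*(f_1))$, using that $P_{g_t}[(g^*/g_t)\xi(f)] = P_{g^*}\xi(f)$, i.e.\ the IS weight makes $\hat R_T$ conditionally unbiased for $R^*$. Decomposing $R^*(\hat f_T) - R^*(f_1) = [R^*(\hat f_T) - \hat R_T(\hat f_T)] + [\hat R_T(\hat f_T) - \hat R_T(f_1)] + [\hat R_T(f_1) - R^*(f_1)]$ and using that the middle term is nonpositive by definition of $\hat f_T$, I obtain $R^*(\hat f_T) - \inf_{f\in\F} R^*(f) \le -M_T(\hat f_T) \le \sup_{f\in\F}(-M_T(f))$. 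Since the bracketing structure is preserved under negation (a bracketing of $\Xi$ yields one of $-\Xi$ with the same widths, envelope, and sup-bound), \cref{thm:max_ineq_IS_weighted_MEP} applies verbatim to $-M_T$.

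Next I would invoke \cref{thm:max_ineq_IS_weighted_MEP} in the deterministic special case, so $\widetilde{\gamma}_T = \gamma^{\mathrm{avg}}_T$. By \cref{asm:diameter} the whole class sits in $\F_T(r)$ with $r = \rho_0\|\Lambda\|_{2,g^*}$ and we may take $B = b_0\|\Lambda\|_{2,g^*}$; translation by the fixed function $\ell(f_1,\cdot)$ leaves bracketing numbers unchanged, so $\cNb(\epsilon,\Xi_T,\rho_{T,g^*}) = \Nb(\epsilon,\ell(\F),\|\cdot\|_{2,g^*})$, which by \cref{asm:entropy} obeys $\log(1+\Nb(\epsilon)) \lesssim (\|\Lambda\|_{2,g^*}/\epsilon)^p$. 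The conditional-variance bound $T^{-2}\sum_t \mathrm{Var}((g^*/g_t)\xi(f)(O_t)\mid \bO_{t-1}) \le (\gamma^{\mathrm{avg}}_T/T)\,\rho_{T,g^*}(\xi(f))^2 \le (\gamma^{\mathrm{avg}}_T/T)\,r^2$ is what makes the sub-Gaussian deviation term scale like $r\sqrt{\gamma^{\mathrm{avg}}_T x/T}$; setting $x = \log(2/\delta)$ produces the $\rho_0\sqrt{\gamma^{\mathrm{avg}}_T/T}\sqrt{\log(1/\delta)}$ contribution, while the remaining bounded Bernstein term yields $(b_0\gamma^{\max}_T/T)\{\rho_0^{-p} + \log(1/\delta)\}$.

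The entropy integral $\int_{r^-}^r (\|\Lambda\|_{2,g^*}/\epsilon)^{p/2}\,d\epsilon$ then splits into the two regimes. For $p<2$ it converges at the origin, so I take $r^- \to 0$ and get $\lesssim \rho_0^{1-p/2}\|\Lambda\|_{2,g^*}$, i.e.\ the $\rho_0\sqrt{\gamma^{\mathrm{avg}}_T/T}\,\rho_0^{-p/2}$ term. For $p>2$ the integral diverges, so the finite depth of the chaining is essential: the integral is dominated by $(\|\Lambda\|_{2,g^*})^{p/2}(r^-)^{1-p/2}$, and balancing this chaining term against the standalone tip term $r^-$ gives the optimal choice $r^- \asymp (\gamma^{\mathrm{avg}}_T/T)^{1/p}\|\Lambda\|_{2,g^*}$, which is exactly the leading $(\gamma^{\mathrm{avg}}_T/T)^{1/p}$ rate. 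Collecting all terms and factoring out $\|\Lambda\|_{2,g^*}$ yields the claimed bound in each case.

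The main obstacle is the non-Donsker regime $p>2$: here the entropy integral diverges and, more importantly, a naive application of a generic maximal inequality to the IS-weighted class $\{(g^*/g_t)\xi(f)\}$ would force the tip of the chains to be controlled in $L_2$, paying a factor of $\gamma^{\max}_T$ where only $\gamma^{\mathrm{avg}}_T$ is needed (see \cref{remark:isstructure}). The delicate point is therefore that the dominant $(\gamma^{\mathrm{avg}}_T/T)^{1/p}$ term must be extracted through the IS-aware tip truncation built into \cref{thm:max_ineq_IS_weighted_MEP}, in which the IS ratio is integrated out in $L_1$, so that the leading rate depends only on the \emph{average} exploration rate; the accompanying choice of $r^-$ must then be tuned to exactly match the order of the divergent integral.
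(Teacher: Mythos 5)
Your proposal is correct and follows essentially the same route as the paper's proof: the basic ERM inequality reduces the excess risk to the supremum of the (negated) centered IS-weighted process over the $\rho_0\|\Lambda\|_{2,g^*}$-ball, \cref{thm:max_ineq_IS_weighted_MEP} is applied in the deterministic special case (so $\widetilde{\gamma}_T=\gamma^{\text{avg}}_T$) with $r=\rho_0\|\Lambda\|_{2,g^*}$ and $B=b_0\|\Lambda\|_{2,g^*}$, and the two regimes are handled by taking $r^-=0$ for $p<2$ and optimizing $r^-\asymp(\gamma^{\text{avg}}_T/T)^{1/p}\|\Lambda\|_{2,g^*}$ for $p>2$. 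Your explicit handling of the negation and of the translation-invariance of the bracketing numbers is slightly more careful than the paper's terse write-up, but the argument is the same.
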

For $p=2$ the bound is similar to the second case but with polylog terms; for brevity we omit the $p=2$ case in this paper.
\Cref{thm:ISWERMslow} suggests that the excess risk of ISWERM converges at the rate of $(\gamma^\text{avg}_T/T)^{\frac1p\wedge\frac12}$. 
For example, if $\gamma^\text{avg}_T=O(T^\beta)$ and $p<2$, we obtain $O(T^{-\frac12(1-\beta)})$. For $\beta=0$ this matches the familiar slow rate of iid settings.
However, in many cases we can obtain faster rates.
\begin{assumption}[Variance Bound]\label{asm:variance} For some $\alpha>0$, we have 
\begin{align}
    \|\ell(f,\cdot)-\ell(f_1,\cdot)\|_{2,g^*} \lesssim \|\Lambda\|_{2,g^*}
\left(\frac{R^*(f)-R^*(f_1)}{\|\Lambda\|_{2,g^*}}\right)^{\frac \alpha 2}\quad\forall f\in\F.
\end{align}
\end{assumption}
\begin{assumption}[Convexity]\label{asm:convexity} $\F$ is convex and $\ell(\cdot,O)$ is almost surely convex.
\end{assumption}
\begin{theorem}[Fast Rates for ISWERM]\label{thm:ISWERMfast}
Suppose \cref{asm:exploration,asm:entropy,asm:diameter,asm:variance,asm:convexity} hold with $p<2$. Then for any $\delta\in(0,1/2)$, we have that, with probability at least $1-\delta$,
\begin{align}
    R^*(\hat f_T)-R^*(f_1) \lesssim \|\Lambda\|_{2,g^*} \times&\left\lbrace \left( \frac{\gamma^\text{avg}_T}{T}\right)^{\frac{1}{2-\alpha + p \alpha / 2}} + \left(\frac{b_0 \gamma_T^{\max}}{T} \right)^{\frac{1}{1 + p \alpha /2}}\right.\\
    &\left.~~+ \left(\frac{\gamma^\text{avg}_T \log(1/\delta)}{T} \right)^{\frac{1}{2-\alpha}} + \frac{b_0 \gamma_T^{\max} \log(1 / \delta)}{T}\right\rbrace
\end{align}
\end{theorem}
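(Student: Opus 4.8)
The plan is to reduce the excess risk to a single localized empirical-process supremum and then run a peeling/fixed-point argument in which each of the four terms of \cref{thm:max_ineq_IS_weighted_MEP} produces exactly one of the four terms in the stated bound. First I would take $\xi_t(f):=\ell(f,\cdot)-\ell(f_1,\cdot)$, which are deterministic functions, placing us in the special case of \cref{thm:max_ineq_IS_weighted_MEP} with $\widetilde{\gamma}_T=\gamma^{\text{avg}}_T$ and with the sequential bracketing number reducing to $\Nb(\epsilon,\ell(\F),\|\cdot\|_{2,g^*})$ (translating $\ell(\F)$ by the fixed $\ell(f_1)$ leaves bracketing numbers unchanged, so \cref{asm:entropy} applies directly). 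Computing the conditional mean gives $\Eb{(g^*/g_t)\xi_t(f)(O_t)\mid\bO_{t-1}}=R^*(f)-R^*(f_1)$, so that $M_T(f)=(\hat R_T(f)-\hat R_T(f_1))-(R^*(f)-R^*(f_1))$. Since $\hat f_T$ minimizes $\hat R_T$ we have $\hat R_T(\hat f_T)-\hat R_T(f_1)\le 0$, whence
\[
R^*(\hat f_T)-R^*(f_1)\le -M_T(\hat f_T)\le \sup_{f\in\F}(-M_T(f)).
\]
The process $-M_T$ has the same IS structure, envelope $B=b_0\|\Lambda\|_{2,g^*}$ (\cref{asm:diameter}), and entropy (\cref{asm:entropy}) as $M_T$, so \cref{thm:max_ineq_IS_weighted_MEP} applies to it verbatim.

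Next I would localize. Define the excess-risk shells $S_j:=\{f\in\F:2^{j-1}s<R^*(f)-R^*(f_1)\le 2^j s\}$ for a threshold $s$ to be chosen; these partition $\{f:R^*(f)-R^*(f_1)>s\}$. On $S_j$, the variance bound (\cref{asm:variance}) converts excess risk into an $L_2(g^*)$ radius, $\rho_{T,g^*}(\xi(f))=\|\ell(f)-\ell(f_1)\|_{2,g^*}\le r_j:=\|\Lambda\|_{2,g^*}^{1-\alpha/2}(2^j s)^{\alpha/2}$, so $S_j\subseteq\F_T(r_j)$. I would then apply \cref{thm:max_ineq_IS_weighted_MEP} on each $\F_T(r_j)$ with tail level $x_j:=x+2\log(j+1)$, using $\log(1+\Nb(\epsilon,\cdot))\lesssim(\|\Lambda\|_{2,g^*}/\epsilon)^{p}$ and, for $p<2$, the convergent entropy integral $\int_0^{r_j}\sqrt{\log(1+\Nb)}\,d\epsilon\lesssim\|\Lambda\|_{2,g^*}^{p/2}r_j^{1-p/2}$. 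This yields, on $S_j$, a high-probability bound of the schematic form
\[
\sup_{S_j}(-M_T)\lesssim \sqrt{\tfrac{\gamma^{\text{avg}}_T}{T}}\,\|\Lambda\|_{2,g^*}^{p/2}r_j^{1-p/2}+\tfrac{\gamma^{\max}_T b_0\|\Lambda\|_{2,g^*}}{T}\Big(\tfrac{\|\Lambda\|_{2,g^*}}{r_j}\Big)^{p}+r_j\sqrt{\tfrac{\gamma^{\text{avg}}_T x}{T}}+\tfrac{b_0\|\Lambda\|_{2,g^*}\gamma^{\max}_T x}{T}.
\]

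I would then close the argument by the standard contradiction step: if every shell bound is at most $2^{j-1}s$, then $\hat f_T$ cannot satisfy $R^*(\hat f_T)-R^*(f_1)>s$, giving the claim with $s$ equal to the resulting fixed point. Substituting $r_j$ into the display, each radius-dependent term scales as a power of $2^j s$ strictly below one (the exponents are $\tfrac\alpha2(1-\tfrac p2)$ and $\tfrac\alpha2$, both below one using $p<2$), so the binding constraint is the smallest shell, and solving $\sup_{S_1}(-M_T)\lesssim s$ term by term gives precisely the four exponents: the entropy-integral term yields $(\gamma^{\text{avg}}_T/T)^{1/(2-\alpha+p\alpha/2)}$, the $\log\Nb$ tip term yields $(b_0\gamma^{\max}_T/T)^{1/(1+p\alpha/2)}$, the sub-Gaussian deviation yields $(\gamma^{\text{avg}}_T\log(1/\delta)/T)^{1/(2-\alpha)}$, and the linear Bernstein term yields $b_0\gamma^{\max}_T\log(1/\delta)/T$ (all times $\|\Lambda\|_{2,g^*}$). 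The role of \cref{asm:convexity}, together with convexity of $\F$, is to make the excess-loss class star-shaped about $\ell(f_1)$ — for $f_\theta:=(1-\theta)f_1+\theta f$, convexity gives $\ell(f_\theta)-\ell(f_1)\le\theta(\ell(f)-\ell(f_1))$ pointwise — which guarantees the localized modulus is sub-root and hence that collapsing to the single critical radius is legitimate.

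I expect the main obstacle to be tracking the two distinct exploration factors ($\gamma^{\text{avg}}_T$ in the chaining and sub-Gaussian terms versus $\gamma^{\max}_T$ in the tip and linear Bernstein terms) correctly through the nonlinear fixed-point equations, and verifying that the peeling union bound over the $O(\log T)$ shells inflates $x$ only logarithmically, so that the clean $\log(1/\delta)$ dependence survives. A secondary delicate point is confirming that \cref{asm:variance} may be invoked uniformly over all $f\in\F$, so that each shell $S_j$ embeds into the ball $\F_T(r_j)$ to which \cref{thm:max_ineq_IS_weighted_MEP} applies.
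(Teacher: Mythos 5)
Your proposal is correct and arrives at the same fixed-point computation as the paper, but it localizes by a genuinely different mechanism. The paper's proof uses \cref{asm:convexity} up front to establish that if some $f$ violates the bound at level $r^2$ while having nonpositive empirical excess risk, then some $f$ with $R^*(f)-R^*(f_1)$ \emph{exactly equal to} $r^2$ does too (by moving along the segment toward $f_1$); this reduces everything to a single application of \cref{thm:max_ineq_IS_weighted_MEP} at the one critical radius $\rho^\alpha\|\Lambda\|_{2,g^*}$ supplied by \cref{asm:variance}, with no union bound. You instead run a classical peeling argument over excess-risk shells $S_j$ with tail levels $x_j=x+2\log(j+1)$, verify that the shell bounds grow sublinearly in $2^js$ so the smallest shell is binding, and recover the identical four exponents. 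Both are valid; the paper's route is shorter and avoids the shell bookkeeping but genuinely needs convexity, whereas your peeling argument in fact never uses \cref{asm:convexity} at all --- the role you assign it (star-shapedness making the localized modulus sub-root) is not what the paper uses it for, and is not needed for your own contradiction step, since peeling over excess-risk shells works for arbitrary $\F$. So your argument is, if anything, slightly more general, at the cost of having to check that the union bound over the $O(\log(b_0\|\Lambda\|_{2,g^*}/s))$ shells (finitely many by \cref{asm:diameter}) only perturbs the $\log(1/\delta)$ terms by lower-order factors, which you correctly flag and which does go through for $\alpha<2$ and $p<2$.
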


\para{The entropy condition.}
\Cref{asm:entropy} assumes an entropy bound on the loss class $\ell(\F)$. For many loss functions, we can easily satisfy this condition by assuming an entropy condition on $\F$ itself.
\begin{assumption}[Entropy on $\F$]\label{asm:entropy2} There exists $p >\later{\geq} 0$ and an envelope function $F$ of $\mathcal{F}$ such that
$$
\log\Nb(\epsilon \|F\|_{2,g^*},\F,\|\cdot\|_{2,g^*})\lesssim\epsilon^{-p}.\later{\pw{\epsilon^{-p}\quad&p>0,\\\log(1/\epsilon)\quad&p=0.}}
$$
\end{assumption}
\begin{lemma}[Lemma 4 in \citet{bibaut2019fast}]\label{lemma:bkting_unimodal_loss}
Suppose that $\{\ell(\cdot,o):o\in\cO\}$ is a set of $\Rl\to\Rl$ unimodal functions that are equi-Lipschitz. Then \cref{asm:entropy2} implies \cref{asm:entropy}.
\end{lemma}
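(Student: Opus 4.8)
The plan is to exploit the prediction-based structure that the hypothesis of the lemma encodes: saying that $\{\ell(\cdot,o):o\in\cO\}$ is a family of $\Rl\to\Rl$ functions means that $\ell(f,o)$ depends on $f$ only through the scalar prediction $f(x,a)$, so that writing $\phi_o(\cdot):=\ell(\cdot,o):\Rl\to\Rl$ we have $\ell(f,o)=\phi_o(f(x,a))$, with every $\phi_o$ unimodal and all sharing a common Lipschitz constant $L$. I would then show that the real-valued Lipschitz map $v\mapsto\phi_o(v)$ transports an $L_2(g^*)$-bracketing of $\F$ into an $L_2(g^*)$-bracketing of $\ell(\F)$ of the same cardinality, with bracket width inflated only by the constant $L$; this is exactly what is needed to pass from \cref{asm:entropy2} to \cref{asm:entropy}.

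First I would fix the envelope. Taking $\Lambda(o):=|\ell(0,o)|+L\,F(x,a)$, equi-Lipschitzness gives $|\ell(f,o)|\le|\ell(0,o)|+L|f(x,a)|\le\Lambda(o)$ for every $f\in\F$, so $\Lambda$ is a valid envelope of $\ell(\F)$. Moreover $\Lambda\ge L\,F$ pointwise, so $\|\Lambda\|_{2,g^*}\ge L\|F\|_{2,g^*}$, and $\Lambda\in L_2(g^*)$ since $F\in L_2(g^*)$ and $\|\ell(0,\cdot)\|_{2,g^*}<\infty$ (the latter being implicit in \cref{asm:entropy}).

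Next, given a minimal $(\epsilon\|F\|_{2,g^*})$-bracketing $\{[l_k,u_k]:k\in[N]\}$ of $\F$ in $\|\cdot\|_{2,g^*}$ with $N=\Nb(\epsilon\|F\|_{2,g^*},\F,\|\cdot\|_{2,g^*})$, I would define, for each $k$,
\[
U_k(o):=\sup_{v\in[l_k(x,a),\,u_k(x,a)]}\phi_o(v),\qquad L_k(o):=\inf_{v\in[l_k(x,a),\,u_k(x,a)]}\phi_o(v).
\]
Here unimodality is used to make these brackets explicit and manifestly measurable: on any interval a unimodal function attains its supremum at an endpoint and its infimum either at an endpoint or at its mode, so $U_k=\max(\phi_o(l_k),\phi_o(u_k))$ and $L_k=\phi_o(m_o^k)$, where $m_o^k$ is the mode of $\phi_o$ clamped to $[l_k,u_k]$; both are jointly measurable in $o=(x,a,y)$. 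By construction $l_k\le f\le u_k$ pointwise forces $L_k\le\ell(f,\cdot)\le U_k$, so $\{[L_k,U_k]\}$ covers $\ell(\F)$. For the width, any $L$-Lipschitz function satisfies $\sup-\inf\le L\cdot(\text{interval length})$ on every interval, whence $U_k-L_k\le L(u_k-l_k)$ pointwise and
\[
\|U_k-L_k\|_{2,g^*}\le L\|u_k-l_k\|_{2,g^*}\le L\,\epsilon\|F\|_{2,g^*}\le\epsilon\|\Lambda\|_{2,g^*},
\]
using $\|\Lambda\|_{2,g^*}\ge L\|F\|_{2,g^*}$. Thus $\{[L_k,U_k]\}$ is an $(\epsilon\|\Lambda\|_{2,g^*})$-bracketing of $\ell(\F)$ of cardinality $N$, so $\Nb(\epsilon\|\Lambda\|_{2,g^*},\ell(\F),\|\cdot\|_{2,g^*})\le\Nb(\epsilon\|F\|_{2,g^*},\F,\|\cdot\|_{2,g^*})$; taking logarithms and invoking \cref{asm:entropy2} gives $\log\Nb(\epsilon\|\Lambda\|_{2,g^*},\ell(\F),\|\cdot\|_{2,g^*})\lesssim\epsilon^{-p}$, which is \cref{asm:entropy}.

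The main obstacle is not analytic depth but the transformation step itself: one must verify that the composition $v\mapsto\phi_o(v)$ sends $L_2(g^*)$ brackets to $L_2(g^*)$ brackets with only a constant-factor width inflation and no loss in cardinality, while keeping the constructed upper/lower envelopes measurable. Equi-Lipschitzness is what controls the width uniformly in $o$, and unimodality is what renders the pointwise optimal envelopes over each prediction interval explicit (values at the endpoints and at the mode) rather than suprema and infima over a continuum; the care lies in treating the infimum correctly, since an interior mode can make the minimum strictly smaller than both endpoint values.
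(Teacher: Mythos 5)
Your proof is correct. Note that the paper itself does not prove this lemma: it is imported verbatim from the cited reference, and the appendix merely restates the quantitative ``long version'' $\Nb(L\epsilon,\ell(\F),\|\cdot\|_{\mu,p})\le \Nb(\epsilon,\F,\|\cdot\|_{\mu,p})$ without argument. Your reconstruction is exactly that quantitative statement and the standard way to obtain it: transport each bracket $[l_k,u_k]$ of $\F$ to the pointwise extremal bracket $\bigl[\inf_{v\in[l_k,u_k]}\phi_o(v),\ \sup_{v\in[l_k,u_k]}\phi_o(v)\bigr]$ of $\ell(\F)$, whose width is controlled pointwise by $L(u_k-l_k)$, so cardinality is preserved and the scale only inflates by $L$, which is then absorbed by choosing $\Lambda\ge L\,F$. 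Two small remarks. First, as you correctly observe, the width bound uses only equi-Lipschitzness; unimodality enters only to make the transported upper and lower envelopes explicit (endpoint values and the value at the clamped mode) and hence measurable without appealing to joint measurability and suprema over countable dense subsets --- your reading of the hypothesis is the right one. Second, your envelope $\Lambda(o)=|\ell(0,o)|+L\,F(x,a)$ needs $\|\ell(0,\cdot)\|_{2,g^*}<\infty$, which is not literally among the hypotheses; this is harmless here since in every instantiation in the paper the loss is bounded and one simply takes $\Lambda$ constant, but it is worth stating as an assumption rather than calling it implicit in \cref{asm:entropy}.
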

There are many examples of $\F$ for which bracketing entropy conditions are known.
The class of $\beta$-H\"older smooth functions (meaning having derivatives of orders up to $\mathfrak b=\sup\{i\in\mathbb Z:i<\beta\}$ and the $\mathfrak b$-order derivatives are $(\beta-\mathfrak b)$-H\"older continuous) on a compact domain in $\Rl d$ has $p=d/\beta$ \citep[Corollary 2.7.2]{van1996weak}.
The class of \emph{convex} Lipschitz functions on a compact domain in $\Rl d$ has $p=d/2$ \citep[Corollary 2.7.10]{van1996weak}.
The class of monotone functions on $\Rl$ has $p=1$ \citep[Theorem 2.7.5]{van1996weak}.
If $\F=\{f(o;\theta):\theta\in\Theta\}$, $f(o;\theta)$ is Lipschitz in $\theta$, and $\Theta\subseteq\Rl d$ is compact, then any $p>0$ holds \citep[Theorem 2.7.11]{van1996weak}.
The class of c\`adl\`ag functions $[0,1]^d \to \Rl$ with sectional variation norm (aka Hardy-Krause variation) no larger than $M > 0$ has envelope-scaled bracketing entropy $O(\epsilon^{-1} \abs{ \log(1/\epsilon)}^{2(d-1)})$ \citep{bibaut2019fast}, so \cref{asm:entropy2} holds with any $p>1$ (or, we can track the log terms). 
Since trees with bounded output range and finite depth fall in the class of c\`adl\`ag functions with bounded sectional variation norm, decision tree classes also satisfy \cref{asm:entropy2} with any $p>1$.

\section{Least squares regression using ISWERM}\label{sec:regression}

We now instantiate ISWERM for least squares regression. Consider $\Y=[-\sqrt{M},\sqrt{M}]$, for some $M > 0$, $\F\subseteq[\X\times\A\to\Y]$, and $\ell(f,o)=(y-f(x,a))^2$. If $\mathcal{F}$ is convex, strongly convex losses such as $\ell$ always yield a variance bound with respect to any population risk minimizer over $\mathcal{F}$ (see e.g. lemma 15 in \cite{bartlett2006convexity}). Let $f_1 \in \argmin_{f \in \F} R^*(f)$ be such a population risk minimizer. We present in the lemma below properties relevant for application of theorems \ref{thm:ISWERMslow} and \ref{thm:ISWERMfast}

\begin{lemma}[Properties of the square loss.]\label{lemma:properties_square_loss}
Consider the setting of the current section. The square loss $\ell$ over $\mathcal{F} \times \mathcal{O}$ satisfies the following variance bound:
\begin{align}
    \left\lVert \ell(f,\cdot) - \ell(f_1,\cdot) \right\rVert_{2,g^*} \leq 4 \sqrt{M} ( R^*(f) - R^*(f_1) )^{1/2}\ \forall f \in \mathcal{F},
\end{align}
 and  the following Lispchitz property:
\begin{align}
    | \ell(f,o) - \ell(f',o) | \leq \sqrt{M} |f(a,x) - f'(a,x)|\ \forall f, f' \in \F, o \in \cO.
\end{align}
\end{lemma}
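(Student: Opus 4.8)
The plan is to prove the two displayed inequalities separately, both by first expanding a difference of squared losses via the identity $a^2-b^2=(a-b)(a+b)$, and then controlling the resulting factors using the fact that $Y$ and every $f\in\F$ take values in $[-\sqrt M,\sqrt M]$.

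For the Lipschitz property, I would fix $o=(x,a,y)$ and write
$$\ell(f,o)-\ell(f',o)=(y-f(x,a))^2-(y-f'(x,a))^2=(f'(x,a)-f(x,a))\,(2y-f(x,a)-f'(x,a)).$$
Since $y,f(x,a),f'(x,a)\in[-\sqrt M,\sqrt M]$, the second factor is bounded in absolute value by $4\sqrt M$, which gives the claimed Lipschitz property (with constant $4\sqrt M$). This step is pure algebra plus the range constraint.

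For the variance bound I would proceed in three steps. First, the same factorization gives $\ell(f,\cdot)-\ell(f_1,\cdot)=(f_1-f)(2Y-f-f_1)$ evaluated at $(X,A,Y)$; bounding $|2Y-f-f_1|\le 4\sqrt M$ pointwise and integrating out $Y$ yields $\|\ell(f,\cdot)-\ell(f_1,\cdot)\|_{2,g^*}\le 4\sqrt M\,\|f-f_1\|_{2,g^*}$, where for a function of $(x,a)$ the pseudonorm reduces to $(\E_{p_X\times g^*}[(f-f_1)^2])^{1/2}$. Second, I would reduce $R^*$ to an $L_2(g^*)$ distance to the conditional mean: with $\mu(x,a)=\int y\,p_Y(y\mid x,a)\,d\lambda_\Y(y)$, orthogonality of the residual $Y-\mu$ to any function of $(X,A)$ gives the bias--variance decomposition $R^*(f)-R^*(\mu)=\|f-\mu\|_{2,g^*}^2$ already recorded in \cref{ex:regression}, so that $f_1$ is exactly the $L_2(g^*)$-projection of $\mu$ onto $\F$. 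Third, and this is where \cref{asm:convexity} enters, I would invoke first-order optimality of the projection: for any $f\in\F$, convexity of $\F$ lets me take $f_1+\lambda(f-f_1)\in\F$ for small $\lambda>0$, and differentiating $\lambda\mapsto\|f_1+\lambda(f-f_1)-\mu\|_{2,g^*}^2$ at $\lambda=0^+$ gives $\langle f-f_1,\,\mu-f_1\rangle_{2,g^*}\le 0$. Expanding $\|f-\mu\|_{2,g^*}^2=\|f-f_1\|_{2,g^*}^2+2\langle f-f_1,\,f_1-\mu\rangle_{2,g^*}+\|f_1-\mu\|_{2,g^*}^2$ and using this inequality yields $\|f-f_1\|_{2,g^*}^2\le R^*(f)-R^*(f_1)$. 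Chaining the three steps produces $\|\ell(f,\cdot)-\ell(f_1,\cdot)\|_{2,g^*}\le 4\sqrt M\,(R^*(f)-R^*(f_1))^{1/2}$.

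I expect the only delicate step to be the third one, namely converting the population optimality of $f_1$ into the quadratic lower bound $R^*(f)-R^*(f_1)\ge\|f-f_1\|_{2,g^*}^2$. This is precisely the strong-convexity/projection argument referenced through Bartlett et al.'s Lemma~15, and it is the only place where convexity of $\F$ is genuinely needed; the remaining ingredients — the two factorizations and the pointwise bound $|2y-f-f'|\le 4\sqrt M$ — are routine once the value ranges of $Y$ and $\F$ are invoked.
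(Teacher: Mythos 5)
Your proof is correct and follows essentially the same route as the paper's: the factorization $a^2-b^2=(a-b)(a+b)$ combined with the range bound $|2y-f-f'|\le 4\sqrt M$ for the Lipschitz property, and the Pythagoras decomposition $R^*(f)-R^*(\mu)=\|f-\mu\|_{2,g^*}^2$ together with the first-order optimality of the projection $f_1$ of $\mu$ onto the convex class $\F$ (giving $\langle f-f_1,\,f_1-\mu\rangle_{2,g^*}\ge 0$) for the variance bound. Note that, exactly like the paper's own proof, your argument yields the Lipschitz constant $4\sqrt M$ rather than the $\sqrt M$ appearing in the lemma statement, so the statement's constant appears to be a typo and your observation of this is accurate.
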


\begin{theorem}[Least squares regression]\label{thm:iswls}
Suppose \cref{asm:exploration} holds. Suppose \cref{asm:entropy2} holds for the envelope taken to be constant equal to $\sqrt{M}$, the range of the regression functions. Then for any $\delta\in(0,1/2)$, we have that, with probability at least $1-\delta$,
\begin{align}
    {R^*(\widehat{f}_T) - R^*(f_1)}
    \lesssim 
    M\times
    \begin{cases}
    \left( \frac{\gamma^{\max}_T}{T}\right)^{\frac{1}{1+p/2}} + 
    \frac{\gamma_T^{\max} \log(1 / \delta)}{T} 
    & \text{ if } p < 2, \\
    \left(\frac{\gamma^\text{avg}_T}{T}\right)^{\frac{1}{p}} +  \frac{\gamma^{\max}_T}{T} + \sqrt{\frac{ \gamma^\text{avg}_T \log (1 /\delta)}{T}} + \frac{\gamma_T^{\max} \log(1 / \delta)}{T}  
    & \text{ if } p > 2.
    \end{cases}
\end{align}
\end{theorem}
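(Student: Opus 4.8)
The plan is to verify that the least-squares setup satisfies the hypotheses of the two generic ISWERM theorems and then simply read off the rates, treating $M$ and problem-independent constants as the only quantities that survive. First I would dispatch the two assumptions that are essentially given for free in this section. \cref{asm:convexity} holds by the standing convexity assumption ($\F$ convex and $\ell(\cdot,O)=(\cdot-Y)^2$ almost surely convex). And \cref{asm:variance} holds with $\alpha=1$: it is precisely the variance bound of \cref{lemma:properties_square_loss}, $\|\ell(f,\cdot)-\ell(f_1,\cdot)\|_{2,g^*}\le 4\sqrt M\,(R^*(f)-R^*(f_1))^{1/2}$, matched against the general form of \cref{asm:variance} using an envelope $\Lambda$ for $\ell(\F)$ with $\|\Lambda\|_{2,g^*}$ of order $M$ (since $0\le\ell(f)\le 4M$). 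The structural fact making $\alpha=1$ available is that the constrained minimizer $f_1$ is the $L_2(g^*)$-projection of the regression function onto the convex class $\F$, so that $\|f-f_1\|_{2,g^*}^2\le R^*(f)-R^*(f_1)$; this is exactly where convexity of $\F$ enters.

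Next I would transfer the entropy hypothesis from $\F$ to $\ell(\F)$. Viewed as a function of the prediction value $v=f(x,a)\in[-\sqrt M,\sqrt M]$, the map $v\mapsto(y-v)^2$ is unimodal and equi-Lipschitz on this range, with Lipschitz constant of order $\sqrt M$ (the Lipschitz property recorded in \cref{lemma:properties_square_loss}). Hence \cref{lemma:bkting_unimodal_loss} applies, and \cref{asm:entropy2} — assumed here with the constant envelope $F=\sqrt M$ — yields \cref{asm:entropy} for $\ell(\F)$ with the same exponent $p$, the induced envelope $\Lambda$ again scaling like $M$. Finally I would check \cref{asm:diameter}: boundedness of the range together with the Lipschitz property gives $\|\ell(f,\cdot)-\ell(f_1,\cdot)\|_\infty=O(M)$ and $\|\ell(f,\cdot)-\ell(f_1,\cdot)\|_{2,g^*}=O(M)$, so that $b_0$ and $\rho_0$ may be taken to be universal $O(1)$ constants once divided by $\|\Lambda\|_{2,g^*}\asymp M$.

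With every hypothesis in place, the two regimes follow by direct substitution. For $p<2$ I would invoke the fast-rate bound \cref{thm:ISWERMfast} with $\alpha=1$: the exponents collapse via $\tfrac{1}{2-\alpha+p\alpha/2}=\tfrac{1}{1+p/2}$, $\tfrac{1}{1+p\alpha/2}=\tfrac{1}{1+p/2}$, and $\tfrac{1}{2-\alpha}=1$; bounding $\gamma^{\text{avg}}_T\le\gamma^{\max}_T$, absorbing the $O(1)$ constants $b_0,\rho_0$, and factoring out $\|\Lambda\|_{2,g^*}\asymp M$ merges the four terms into $M\{(\gamma^{\max}_T/T)^{1/(1+p/2)}+\gamma^{\max}_T\log(1/\delta)/T\}$. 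For $p>2$, where the fast-rate theorem does not apply, I would instead invoke the $p>2$ branch of the slow-rate bound \cref{thm:ISWERMslow}, once more substituting $\|\Lambda\|_{2,g^*}\asymp M$ and treating $b_0,\rho_0,\rho_0^{-p}$ as constants; its four terms map one-to-one onto $M\{(\gamma^{\text{avg}}_T/T)^{1/p}+\gamma^{\max}_T/T+\sqrt{\gamma^{\text{avg}}_T\log(1/\delta)/T}+\gamma^{\max}_T\log(1/\delta)/T\}$.

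I expect no deep obstacle here, since the substantive content is already packaged into the cited lemmas and the two ISWERM theorems; the proof is an instantiation. The main care needed is bookkeeping: confirming that each problem-dependent constant ($\|\Lambda\|_{2,g^*}$, $b_0$, $\rho_0$, the Lipschitz and variance-bound constants) is either universal or a fixed power of $M$, so that a single factor of $M$ cleanly factors out and the exponents line up. The one genuinely substantive input is the $\alpha=1$ variance bound, which — as noted — rests on the projection/first-order-optimality inequality and hence on convexity of $\F$; everything else is routine verification feeding \cref{thm:ISWERMfast,thm:ISWERMslow}.
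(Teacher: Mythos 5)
Your proposal is correct and follows essentially the same route as the paper's proof: verify \cref{asm:entropy,asm:diameter,asm:variance} via the Lipschitz/unimodal bracketing-preservation lemma and \cref{lemma:properties_square_loss} (giving $\alpha=1$, $\Lambda\asymp M$, $b_0=\rho_0=O(1)$), then instantiate \cref{thm:ISWERMfast} for $p<2$ and \cref{thm:ISWERMslow} for $p>2$. The only cosmetic difference is that the paper takes $\Lambda: o\mapsto 4M$ explicitly rather than arguing $\|\Lambda\|_{2,g^*}\asymp M$, and it leaves the $\gamma^{\text{avg}}_T\le\gamma^{\max}_T$ term-merging implicit where you spell it out.
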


\section{Policy Learning using ISWERM}\label{sec:policylearning}

We next instantiate ISWERM for policy learning. Consider $\Y=[-M,M]$, $\F\subseteq[\X\times\A\to\Rl]$ as in \cref{ex:policy}.
\later{We will consider two approaches to policy learning: direct policy value optimization and a reduction to cost-sensitive classification using surrogate classification losses.}
Let $\ell\later{_0}(f,o)=yf(x,a)$ and $g^*(a\mid x)=1$ so that $P_{g^*}\ell\later{_0}(f,\cdot)=\E_{p_X\times f\times p_Y}[y]=\E_{p_X}[\sum_{a\in\A}f(X,a)\mu(X,a)]$ is exactly the average outcome under a policy $f$ (or, its negative value).

\later{\subsection{Direct Policy Value Optimization}\label{sec:policydirect}}

\later{First, we consider using ISWERM with $\ell=\ell_0$ so that we are directly targeting the average outcome of interest with ISWERM.}
We first give specification-agnostic slow rates, which also close an open gap in the literature.
\begin{theorem}[ISWERM Policy Learning: slow rates]\label{thm:policyslow}
Suppose \cref{asm:exploration} holds and suppose that \cref{asm:entropy2} holds withe envelope constant equal to 1 (which is the maximal range of policies). Then for any $\delta\in(0,1/2)$, we have that, with probability at least $1-\delta$,
\begin{align}
    &{R^*(\hat f_T)-\inf_{f\in\F}R^*(f)}
\lesssim {M}\times\pw{
\sqrt{\frac{\gamma^\text{avg}_T}{T}}  \sqrt{\log (1/\delta)} + \frac{\gamma^{\max}_T}{T} \log(1 / \delta) \quad &p<2,\\
\left(\frac{\gamma^\text{avg}_T}{T}\right)^{\frac 1 p}+  \sqrt{\frac{\gamma^\text{avg}_T}{T}}  \sqrt{\log (1/\delta)}  + \frac{ \gamma^{\max}_T}{T}  \log(1 / \delta)\quad&p> 2.
}
\end{align}
\end{theorem}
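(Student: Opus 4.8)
The plan is to obtain \cref{thm:policyslow} as a direct instantiation of the generic slow-rate bound \cref{thm:ISWERMslow}. Here the policy-learning loss is $\ell(f,o)=yf(x,a)$ with $g^*\equiv 1$, and the hypotheses I am handed are \cref{asm:exploration} together with the entropy bound \cref{asm:entropy2} on $\F$ (constant envelope $F\equiv 1$, the maximal range of policies). So the first task is to translate these into the hypotheses that \cref{thm:ISWERMslow} places on the loss class $\ell(\F)$ --- namely \cref{asm:entropy} (entropy on $\ell(\F)$) and \cref{asm:diameter} (diameters) --- after which I apply \cref{thm:ISWERMslow} and simplify the resulting expression.

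For the entropy condition I would write $\ell(f,o)=\phi_o(f(x,a))$ with $\phi_o(z)=yz$. Because $\Y=[-M,M]$, each scalar map $\phi_o$ is linear, hence monotone and in particular unimodal, and $|y|\le M$-Lipschitz uniformly over $o\in\cO$. Thus $\{\ell(\cdot,o):o\in\cO\}$ is a family of equi-Lipschitz unimodal functions, and \cref{lemma:bkting_unimodal_loss} upgrades \cref{asm:entropy2} on $\F$ directly to \cref{asm:entropy} on $\ell(\F)$ with the same exponent $p$. The natural envelope of $\ell(\F)$ is $\Lambda=M\cdot F$, so with $F\equiv 1$ we have $\|\Lambda\|_{2,g^*}\asymp M$ (up to an $O(1)$ normalization of the reference measure $P_{g^*}$).

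For the diameters, the same uniform Lipschitz bound gives $\|\ell(f,\cdot)-\ell(f_1,\cdot)\|_{2,g^*}\le M\|f-f_1\|_{2,g^*}\le 2M\|F\|_{2,g^*}$ and $\|\ell(f,\cdot)-\ell(f_1,\cdot)\|_\infty\le 2M$, since $|f|,|f_1|\le 1$. Comparing with $\|\Lambda\|_{2,g^*}\asymp M\|F\|_{2,g^*}$ shows \cref{asm:diameter} holds with $\rho_0=O(1)$ and $b_0\asymp\|F\|_{2,g^*}^{-1}$, so that in particular $\rho_0\|\Lambda\|_{2,g^*}\asymp M$ and $b_0\|\Lambda\|_{2,g^*}\asymp M$. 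All hypotheses of \cref{thm:ISWERMslow} being verified, I plug in: the prefactor becomes $\|\Lambda\|_{2,g^*}\asymp M$, and each occurrence of $b_0$, $\rho_0$, $\rho_0^{-p/2}$, $\rho_0^{-p}$, and $\|F\|_{2,g^*}$ is an absolute constant absorbed by $\lesssim$.

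The final step is cosmetic simplification. Since $\delta<1/2$ we have $\log(1/\delta)\gtrsim 1$ and hence $\sqrt{\log(1/\delta)}\gtrsim 1$; this lets me absorb the standalone (non-$\delta$) copies of $\sqrt{\gamma^\text{avg}_T/T}$ and $\gamma^{\max}_T/T$ --- which arise from the $\rho_0^{-p/2}$ and $\rho_0^{-p}$ contributions --- into the $\sqrt{\log(1/\delta)}$ and $\log(1/\delta)$ terms, yielding exactly the two displayed cases ($p<2$ and $p>2$), with the $\delta$-free $(\gamma^\text{avg}_T/T)^{1/p}$ term retained in the $p>2$ case. I expect no genuine analytic obstacle here: the substantive content is inherited entirely from \cref{thm:ISWERMslow} and the underlying maximal inequality \cref{thm:max_ineq_IS_weighted_MEP}. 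The one thing that must be checked carefully rather than waved through is that the reduction constants $b_0,\rho_0,\|F\|_{2,g^*}$ (and the action-space normalization) carry no hidden dependence on $T$ or on the weights $\gamma_t$, so that the stated rate depends on the data only through $\gamma^\text{avg}_T,\gamma^{\max}_T,T$ and on the problem only through $M$ and $p$.
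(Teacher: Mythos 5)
Your proposal is correct and follows essentially the same route as the paper's proof: transfer \cref{asm:entropy2} on $\F$ to \cref{asm:entropy} on $\ell(\F)$ via the equi-Lipschitz/unimodal bracketing-preservation lemma (the loss $z\mapsto yz$ being linear hence unimodal and $M$-Lipschitz), take $\Lambda\equiv M$ with $b_0,\rho_0=O(1)$ for \cref{asm:diameter}, instantiate \cref{thm:ISWERMslow}, and absorb the $\delta$-free terms using $\log(1/\delta)\gtrsim 1$ for $\delta<1/2$. The paper states this more tersely (taking $b_0=\rho_0=1$ outright), but the substance is identical.
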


\begin{remark}[Comparison to \citet{zhan2021policy}]\label{remark:zhan}
\citet{zhan2021policy} consider a deterministic policy class ($f_h(x,a)=\Ind(h(x)=a)),\,\mathcal H\subseteq[\X\to\A]$) and assume that $\int_0^1\sqrt{\log N_H(\epsilon^2,\mathcal H)}d\epsilon<\infty$ where $N_H$ is the Hamming covering. This roughly corresponds to our case $p\leq1$ if we heuristically equate bracketing numbers with Hamming covering numbers. While this equality does not hold and the two complexity measures are simply different, nonetheless, many classes that have finite Hamming entropy integral also have bracketing entropy with $p<2$. This includes, for example, policy classes given by finite-depth trees and policy classes parametrized by finite-dimensional parameter. In their setting of a finite Hamming entropy integral, \citet{zhan2021policy} show a lower bound of $\Omega((\gamma^\text{avg}_T T)^{-1/2})$ on the expected regret over all policy-learning algorithms and all logging policies satisfying \cref{asm:exploration} (see their Theorem 1), that is, $\Omega(T^{-(1-\beta)/2})$ when $\gamma^\text{avg}_T=\Omega(T^\beta)$, which matches the upper bound one would obtain under epsilon-greedy exploration with $\epsilon_t=t^{-\beta}$.
At the same time, the upper bound they provide on the expected regret incurred by their algorithm is $O(\gamma^\text{avg}_TT^{-1/2})$ (see their Corollary 2.1), that is, $O(T^{-1/2+\beta})$ when $\gamma^\text{avg}_T=O(T^\beta)$. This is a potentially significant gap in the regret rate when exploration is diminishing, $\beta>0$, as is often the case with bandit-collected data.
In contrast, in the almost-analogous case of $p<2$, our \cref{thm:policyslow} gives the rate $O((\gamma^\text{avg}_T T)^{-1/2})$ on the expected regret of policy learning with ISWERM (given by integrating the tail inequality in \cref{thm:policyslow}), that is, $O(T^{-(1-\beta)/2})$ when $\gamma^\text{avg}_T=O(T^\beta)$. This matches the rate in the lower bound of \citet{zhan2021policy} and closes this open gap.

The gap arose from the specific technical route \citet{zhan2021policy} followed (not their algorithm). For the sake of exposition, we give an explanation of the phenomenon in a non-sequential i.i.d. setting, under stationary logging policy $g_1$, and under our own notation. The same phenomenon translates to the sequential setting. Since they use a symmetrization and covering-based approach, they need to work with uniform covering-type entropies\footnote{See \citet[Chapter 2.3]{van1996weak} for an explanation of why uniform covering entropy is natural for bounding symmetrized Rademacher processes.} of the form $\sup_Q \log N(\epsilon, \mathcal{H}, L_2(Q))$ for a certain class $\mathcal{H}$, where the supremum is over all finitely supported distributions. Their approach amounts to taking $\mathcal{H}$ to be the weighted loss class $\{(g^* / g_t) \ell(\pi) \}$. While for $Q = P_{g_1}$, it holds that $\|(g^* / g_1) (\ell(\pi) - \ell(\pi')) \|_{2,P_{g_1}} \leq \gamma_1^{1/2} \|\ell(\pi) - \ell(\pi')|\|_{2,P_{g*}} \lesssim \gamma_1^{1/2} d_H(\pi, \pi')$, for a general $Q$, the best bound is $\|(g^* / g_1)  (\ell(\pi) - \ell(\pi'))\|_{2,Q} \lesssim \gamma_1 d_H(\pi, \pi')$, where $d_H$ is the Hamming distance.
By contrast, when working with bracketing entropy, one only needs to control the size of brackets in terms of $L_2(P_{g_1})$, that is the $L_2$-norm under the distribution of the data $P_{g_1}$. This allows to save a $\sqrt{\gamma_1}$ factor. 
Our results also show that a simple IS weighting algorithm suffices to obtain optimal rates, and the stabilization by $\gamma_t$ employed by \citet{zhan2021policy}, which is inspired by the stabilization employed by \citet{hadad2019confidence,luedtke_vdL2016} for inference purposes, may not be necessary for policy learning purposes. The doubly-robust-style centering may still be beneficial in practice for reducing variance but it does not affect the rate.
\end{remark}

\begin{remark}[Comparison to \cite{foster2018contextual}]
\citet{foster2018contextual}, albeit in a slightly different setting, derive a maximal inequality under sequential covering entropy that also exhibits the correct dependence on the exploration rate as ours. This shows in particular that the suboptimal dependence on the exploration rate of \citet{zhan2021policy} is not a necessary consequence of using sequential covering entropy. Analogously to us, \citet{foster2018contextual} exploit the specific IS-weigthed structure of the loss process, and work with covers of the unweighted policy class directly. Using an $L_\infty$ sequential cover of the unweighted class and using Holder's inequality, they are able to factor out the $L_1$ norm of the IS ratios. This allows them to circumvent the type of sequential cover of the weighted class that  \citet{zhan2021policy} need, and yields optimal $\gamma$ scaling. One caveat of this approach is that the entropy integral in the corresponding bound is expressed in terms of $L_\infty$ sequential covering entropy, which makes it hard to obtain fast rates via localization. Indeed, while variance bounds that allow for localization in $L_2$ norm are common, it is in general much harder to obtain localization in $L_\infty$ norm.
\end{remark}

In well-specified cases, much faster rates of regret convergence are possible. 
We focus on finitely-many actions, $\abs{\A}<\infty$.
Define $\mu^*(X)=\min_{a\in\A}\mu(X,a)$ and fix $a^*(X)$ with $\mu(X,a^*(X))=\mu^*(X)$.
\begin{assumption}[Margin]\label{asm:margin} 
For a constant $\nu\in[0,\infty]$, we have for all $u\geq0$,
$$\textstyle
\Pr_{p_X}\prns{
\min_{a\in\A\backslash\{a^*(X)\}}\mu(X,a)-\mu^*(X)
\leq Mu}^{1/\nu}\lesssim u,
$$
where we define $0^{1/\infty}=0$ and $x^{1/\infty}=1$ for $x\in(0,1]$.
\end{assumption}
This type of margin condition was originally considered in the case of binary classification \citep{mammen1999smooth,tsybakov2004optimal}.
The condition we use is more similar to that used in multi-arm contextual bandits \citep{hu2020smooth,perchet2013multi,hu2020fast}.
The condition controls the density of the arm gap near zero.
It generally holds with $\nu=1$ for sufficiently well-behaved $\mu$ and continuous $X$ and with $\nu=\infty$ for discrete $X$ \citep[see, \eg,][Lemmas 4 and 5]{hu2021fast}.

\begin{lemma}\label{lemma:vb_from_margin_cond} Suppose \cref{asm:margin} holds and $\min_{f\in\F}R^*(f)=\E_{p_X}\mu^*(X)$. Then \cref{asm:variance} holds for $\alpha=\nu/(\nu+1)$ and $\Lambda: o \mapsto M$.
\end{lemma}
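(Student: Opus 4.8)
The plan is to convert the margin condition of \cref{asm:margin} into the variance bound of \cref{asm:variance} via the classical Tsybakov-style peeling argument. Throughout I write $\Delta(x,a) := \mu(x,a) - \mu^*(x) \geq 0$ for the arm suboptimality gap, $G(x) := \min_{a \neq a^*(x)}\Delta(x,a)$ for the gap to the best suboptimal arm (the quantity controlled by \cref{asm:margin}), and, for a policy $f \in \F$, $W_f(x) := 1 - f(x,a^*(x)) = \sum_{a \neq a^*(x)} f(x,a)$ for the probability that $f$ selects a suboptimal action at $x$.

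First I would rewrite both sides of the desired inequality in terms of these quantities. Since $g^* \equiv 1$, $\ell(f,o) = yf(x,a)$, and $|Y| \leq M$, the squared loss-difference norm expands as
\begin{align*}
\|\ell(f,\cdot) - \ell(f_1,\cdot)\|_{2,g^*}^2 &= \E_{p_X}\Big[\textstyle\sum_{a \in \A} \E[Y^2 \mid X, a]\,(f(X,a) - f_1(X,a))^2\Big] \\
&\leq M^2\, \E_{p_X}\Big[\textstyle\sum_{a\in\A}(f(X,a) - f_1(X,a))^2\Big].
\end{align*}
Because the margin condition forces $\Pr_{p_X}(G(X) = 0) = \lim_{u \downarrow 0}\Pr_{p_X}(G(X) \leq Mu) = 0$, the minimizer $a^*(X)$ is a.s.\ unique and the hypothesis $R^*(f_1) = \E_{p_X}\mu^*(X)$ forces $f_1(X,\cdot) = \delta_{a^*(X)}$ for $p_X$-a.e.\ $X$. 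Then the $a = a^*(X)$ term contributes $(1 - f(X,a^*(X)))^2 = W_f(X)^2$, while $\sum_{a \neq a^*}f(X,a)^2 \leq (\sum_{a\neq a^*}f(X,a))^2 = W_f(X)^2$; since $W_f \leq 1$ this yields $\sum_a (f - f_1)^2 \leq 2W_f(X)^2 \leq 2W_f(X)$ a.s., hence $\|\ell(f,\cdot) - \ell(f_1,\cdot)\|_{2,g^*}^2 \leq 2M^2\,\E_{p_X}[W_f(X)]$. On the excess-risk side, the same hypothesis gives the exact identity $R^*(f) - R^*(f_1) = \E_{p_X}[\sum_a f(X,a)\Delta(X,a)]$, and dropping the (zero) $a = a^*$ term and using $\Delta(X,a) \geq G(X)$ for $a \neq a^*$ produces the lower bound $R^*(f) - R^*(f_1) \geq \E_{p_X}[G(X)\,W_f(X)]$.

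The heart of the proof is the peeling step bounding $\E[W_f]$ by a power of the excess risk. For a free threshold $t > 0$ I split $\E[W_f(X)] = \E[W_f\,\Ind(G \leq Mt)] + \E[W_f\,\Ind(G > Mt)]$. The first term is at most $\Pr_{p_X}(G(X) \leq Mt) \lesssim t^\nu$ by \cref{asm:margin} (using $W_f \leq 1$); the second is at most $(Mt)^{-1}\E[W_f G\,\Ind(G > Mt)] \leq (Mt)^{-1}(R^*(f) - R^*(f_1))$ by Markov's inequality and the excess-risk lower bound. Optimizing $t^\nu + (R^*(f)-R^*(f_1))/(Mt)$ over $t$ at $t \asymp ((R^*(f)-R^*(f_1))/M)^{1/(\nu+1)}$ gives $\E[W_f(X)] \lesssim ((R^*(f)-R^*(f_1))/M)^{\nu/(\nu+1)}$. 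Substituting into the variance bound, recalling $\alpha = \nu/(\nu+1)$ and $\|\Lambda\|_{2,g^*} = M\sqrt{|\A|} \asymp M$, and taking square roots recovers exactly the claimed $\|\ell(f,\cdot) - \ell(f_1,\cdot)\|_{2,g^*} \lesssim \|\Lambda\|_{2,g^*}\,((R^*(f)-R^*(f_1))/\|\Lambda\|_{2,g^*})^{\alpha/2}$.

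I expect the main difficulty to be bookkeeping rather than depth. The delicate points are (i) justifying $f_1(X,\cdot) = \delta_{a^*(X)}$ a.e.\ so the variance term collapses to $\E[W_f]$, which is handled by the margin condition rendering $\{G = 0\}$ a $p_X$-null set; and (ii) the boundary cases $\nu \in \{0,\infty\}$, where $\alpha = 0$ makes the bound trivial and $\alpha = 1$ corresponds to a hard margin $G(X) \gtrsim M$ a.s., under which the peeling degenerates to a single Markov step. The optimization over $t$ is the usual bias--variance trade-off and is routine once the two one-sided estimates are established.
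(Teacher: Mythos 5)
Your proposal is correct and follows essentially the same route as the paper's proof: both establish a.s.\ uniqueness of $a^*(X)$ from the margin condition at $u=0$ (hence $f_1=\delta_{a^*(X)}$), bound $\|\ell(f,\cdot)-\ell(f_1,\cdot)\|_{2,g^*}^2\lesssim M^2\Pr(A\neq A^*)$, and convert the margin condition into $\Pr(A\neq A^*)\lesssim((R^*(f)-R^*(f_1))/M)^{\nu/(\nu+1)}$ via a thresholded split optimized over the threshold. The only difference is cosmetic --- you split $\E[W_f]$ directly and apply Markov on the large-gap event, whereas the paper lower-bounds the excess risk by $u(\Pr(A\neq A^*)-\Pr(\Delta\leq u))$ and inverts --- but these are the same trade-off and give the same exponent.
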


\begin{theorem}[ISWERM Policy Learning: fast rates]\label{thm:policyfast}
Suppose \cref{asm:exploration,asm:entropy2,asm:margin} hold with $p<2$ and $\min_{f\in\F}R^*(f)=\E_{p_X}\mu^*(X)$. Then for any $\delta\in(0,1/2)$, with probability at least $1-\delta$,
\begin{align}
    {R^*(\hat f_T)-\E_{p_X}\mu^*(X)} \lesssim &~ {M}\left(\left( \frac{\gamma^\text{avg}_T}{T}\right)^{\frac{1+\nu}{2 + \nu(1 + p /2)}} + \left(\frac{ \gamma_T^{\max}}{T} \right)^{\frac{1 + \nu}{1 + \nu(1 + p / 2)}}
    \right.
    \\&~~~~~~~~~
    \left.
    + \left(\frac{\gamma^\text{avg}_T \log(1/\delta)}{T} \right)^{\frac{1+\nu}{2+\nu}} + \frac{ \gamma_T^{\max} \log(1 / \delta)}{T}\right).
\end{align}
\end{theorem}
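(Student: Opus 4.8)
The plan is to obtain \cref{thm:policyfast} as a specialization of the generic fast-rate bound of \cref{thm:ISWERMfast}, so the bulk of the work is to verify that the policy-learning setup of \cref{ex:policy} satisfies all five hypotheses of that theorem with the right constants, and then to simplify the exponents. \cref{asm:exploration} is granted and $p<2$ is assumed, so four things remain to check: the entropy condition on the loss class, the diameter bounds, the variance bound, and convexity.

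First I would translate the entropy assumption. The policy loss $\ell(f,o)=yf(x,a)$, viewed as a function of the scalar value $f(x,a)$, is the linear map $r\mapsto yr$, which is monotone (hence unimodal) and $\abs{y}\le M$-Lipschitz; thus $\{\ell(\cdot,o):o\in\cO\}$ is equi-Lipschitz and unimodal, and \cref{lemma:bkting_unimodal_loss} converts the hypothesized bound \cref{asm:entropy2} on $\F$ (with the constant envelope $1$) into \cref{asm:entropy} on $\ell(\F)$, with the same exponent $p<2$ and envelope $\Lambda\equiv M$, so that $\|\Lambda\|_{2,g^*}\propto M$. The diameter bounds \cref{asm:diameter} then follow for free: since policies and $f_1$ have range bounded by $1$, we have $\abs{\ell(f,o)-\ell(f_1,o)}=\abs{y}\,\abs{f(x,a)-f_1(x,a)}\le M$ pointwise, and $\|\ell(f,\cdot)-\ell(f_1,\cdot)\|_{2,g^*}\le M\|f-f_1\|_{2,g^*}$, so $b_0$ and $\rho_0$ are absolute constants.

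The variance bound \cref{asm:variance} is exactly the content of \cref{lemma:vb_from_margin_cond}: under \cref{asm:margin} and the well-specification hypothesis $\min_{f\in\F}R^*(f)=\E_{p_X}\mu^*(X)$, it holds with $\alpha=\nu/(\nu+1)$ and $\Lambda\equiv M$, consistent with the envelope produced in the entropy step. For the convexity requirement \cref{asm:convexity} I would note that $\ell(\cdot,O)$ is linear in $f$ and hence almost surely convex; this is the one place needing care, since \cref{thm:policyfast} does not explicitly posit $\F$ convex. I expect this to be the main obstacle, and I would handle it one of two ways: either take $\F$ to be a convex class (e.g.\ of stochastic policies), in which case $\inf_\F R^*$ is still attained at a deterministic policy because the objective is linear, so the well-specification hypothesis is untouched; or observe that for a linear loss the localization underlying \cref{thm:ISWERMfast} uses only ERM optimality $\hat R_T(\hat f_T)\le\hat R_T(f_1)$ together with the variance bound, via $R^*(\hat f_T)-R^*(f_1)\le\sup_{f\in\F}\{-(\hat R_T-R^*)(\ell(f)-\ell(f_1))\}$ and a peeling argument, neither of which requires $\F$ itself to be convex.

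With the five hypotheses in place, the last step is purely arithmetic: substitute $\alpha=\nu/(\nu+1)$, $b_0,\rho_0=O(1)$, and $\|\Lambda\|_{2,g^*}\propto M$ into the bound of \cref{thm:ISWERMfast} and simplify each exponent. One checks $\tfrac{1}{2-\alpha+p\alpha/2}=\tfrac{1+\nu}{2+\nu(1+p/2)}$, $\tfrac{1}{1+p\alpha/2}=\tfrac{1+\nu}{1+\nu(1+p/2)}$, and $\tfrac{1}{2-\alpha}=\tfrac{1+\nu}{2+\nu}$, which reproduces the three fractional-power terms, while the final term $\tfrac{b_0\gamma_T^{\max}\log(1/\delta)}{T}$ collapses to $M\tfrac{\gamma_T^{\max}\log(1/\delta)}{T}$ once $b_0$ and the factor $\|\Lambda\|_{2,g^*}$ are accounted for. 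This yields exactly the claimed bound.
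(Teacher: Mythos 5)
Your proposal is correct and follows essentially the same route as the paper: convert \cref{asm:entropy2} into \cref{asm:entropy} via the Lipschitz/unimodal bracketing-preservation lemma with envelope $\Lambda\equiv M$, take $b_0=\rho_0=1$ for \cref{asm:diameter}, invoke \cref{lemma:vb_from_margin_cond} for \cref{asm:variance} with $\alpha=\nu/(\nu+1)$, and instantiate \cref{thm:ISWERMfast}; your exponent arithmetic matches. You are in fact more careful than the paper on one point --- the paper silently instantiates \cref{thm:ISWERMfast} without addressing \cref{asm:convexity}, whereas you correctly flag that the linearity of $\ell(\cdot,O)$ in $f$ (and, if needed, convexification of $\F$ without changing $\inf_\F R^*$) is what makes this legitimate.
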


\begin{remark}[Classification using ISWERM]
The above results can easily be rephrased for the classification analogue to the regression problem in \cref{sec:regression}, where $\Y=\{\pm1\}$ and we want a classifier based on features $x,a$ to minimize misclassification error. Because the policy learning problem is both of greater interest and greater generality, we focus our presentation on policy learning.
\end{remark}

\later{\subsection{Cost-Sensitive Classification}

An issue with the algorithm explored in \cref{sec:policydirect} is that the optimization problem to solve ISWERM can be difficult. Although it is convex (in fact linear) in $f$, we usually parametrize the policy $f$ as the argmax or softmax of a function assigning to each action a score, given $x$. Then, ISWERM for direct policy value optimization over such a score function becomes nonconvex and difficult, akin to classification using the binary loss. In this section, we explore an alternative approach using a reduction to cost-sensitive classification and using surrogate loss functions, as originally suggested by \citet{zhao2012estimating,jiang2019entropy,beygelzimer2009offset} in the iid setting.

\section{Adaptive Leave-One-Out Cross-Validation}\label{sec:loo}}

\section{Empirical Study}
\label{sec:empirics}

Next, we empirically investigate various risk minimization approaches using data collected by a contextual bandit algorithm, including both ISWERM and unweighted ERM among others. We take 51 different mutli-calss classification dataset from OpenML-CC18 \citep{bischl2017openml} and transform each into a multi-arm contextual bandit problem \citep[following]{dudik2014doubly,dimakopoulou2017estimation,su2019cab}. We then run an epsilon greedy algorithm for $T = 100000$, where we explore uniformly with probability $\epsilon_t=t^{-1/3}$ and otherwise pull the arm that maximizes an estimate of $\mu(x,a)$ based on data so far.
Details are given in \cref{sec:dataconstruction}.

We then consider using this data to regress $Y_t$ on $X_t,A_t$ using different methods where each observation is weighted by $w_t$ using different schemes:
(1) Unweighted ERM: $w_t=1$;
(2) ISWERM: $w_t=g_t^{-1}(A_t\mid X_t)$;
(3) ISFloorWERM: $w_t=\gamma^{-1}_t$, where--inspired by \citep{zhan2021policy}--we weight by the inverse (nonrandom) floor $\gamma_t = \epsilon_t / |\mathcal{A}|$ of the propensity scores;
(4) SqrtISWERM: $w_t=g_t^{-1/2}(A_t\mid X_t)$, which applies the stabilization of \citep{luedtke_vdL2016,hadad2019confidence} to ISWERM;
(5) SqrtISFloorWERM: $w_t=\gamma^{-1/2}_t$;
(6) MRDRWERM: $w(t) = \frac{1-g_t(A_t \mid X_t)}{g_t^2(A_t \mid X_t)}$, which are the weights used by \citet{farajtabar2018more};
(7) MRDRFloorWERM: $w(t) = \frac{1-\gamma_t}{\gamma_t^2}$, which is like MRDRWERM but uses the 
propensity score 
floors $\gamma_t$.
With these sample weights, we run either Ridge regression, LASSO, or CART using \verb|sklearn|'s \verb|RidgeCV(cv=4)|, \verb|LassoCV(cv=4)|, or \verb|DecisionTreeRegressor|, each with default parameters. For Ridge and LASSO we pass as features the intercept-augmented contexts $\{(1, X_t)\}_{t=1}^T$ concatenated by the product of the one-hot encoding of arms $\{A_t\}_{t=1}^T$ with the intercept-augmented contexts $\{(1, X_t)\}_{t=1}^T$. For CART, we use the concatenation of the contexts $\{X_t\}_{t=1}^T$ with the one-hot encoding of arms $\{A_t\}_{t=1}^T$.
To evaluate, we play our bandit anew for $T^\text{test}=1000$ rounds using a uniform exploration policy, $g^*(a\mid x)=1/K$, and record the mean-squared error (MSE) of the regression fits on this data.
We repeat the whole process 64 times and report estimated average MSE and standard error in \cref{fig:mse-linearbandit}.

\para{Results.}
\Cref{fig:mse-linearbandit-lasso,fig:mse-linearbandit-ridge} show that ISWERM clearly outperforms unweighted ERM and all other weighted-ERM schemes for linear regression, with ISWERM's advantage being even more pronounced for LASSO. Intuitively, since a linear model is misspecified, this can be attributed to ISWERM's ability to provide agnostic best-in-class risk guarantees. In contrast, for a better specified model such as CART, all ERM methods perform similarly, as seen in \cref{fig:mse-linearbandit-tree}. We highlight that our focus is not necessarily methodological improvements, and the aim of our experiments is to explore the implications of our theory, not provide state-of-the-art results.
We provide additional empirical results in \cref{apdx:empirics}, the conclusions from which are qualitatively the same.


\begin{figure}
    \centering
\subfloat[LASSO outcome model with cross-validated regularization parameter.\label{fig:mse-linearbandit-lasso}]{\includegraphics[width=1\linewidth]{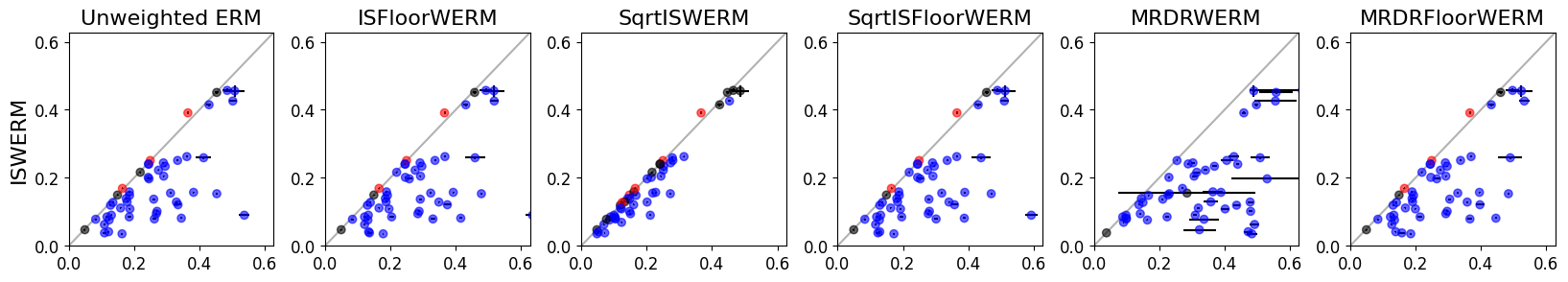}} \\
\subfloat[Ridge outcome model with cross-validated regularization parameter.\label{fig:mse-linearbandit-ridge}]{\includegraphics[width=1\linewidth]{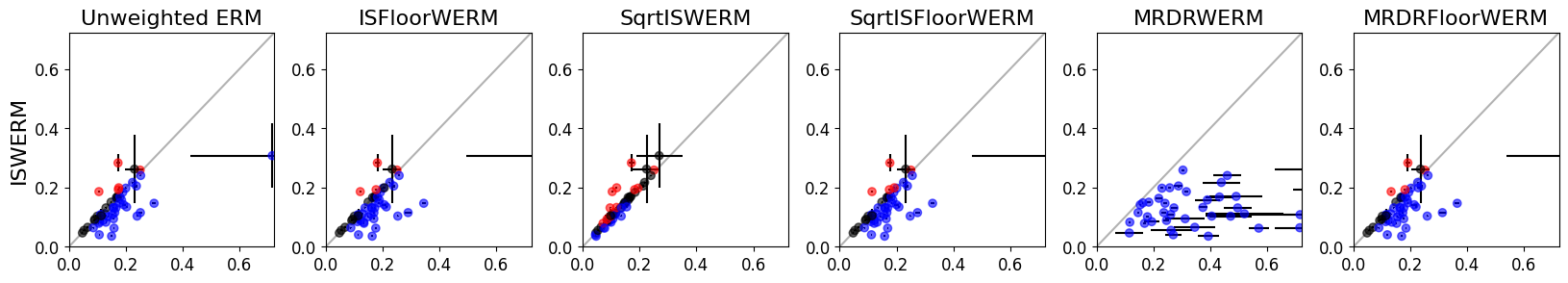}} \\
\subfloat[CART outcome model with unrestricted tree depth.\label{fig:mse-linearbandit-tree}]{
\includegraphics[width=1\linewidth]{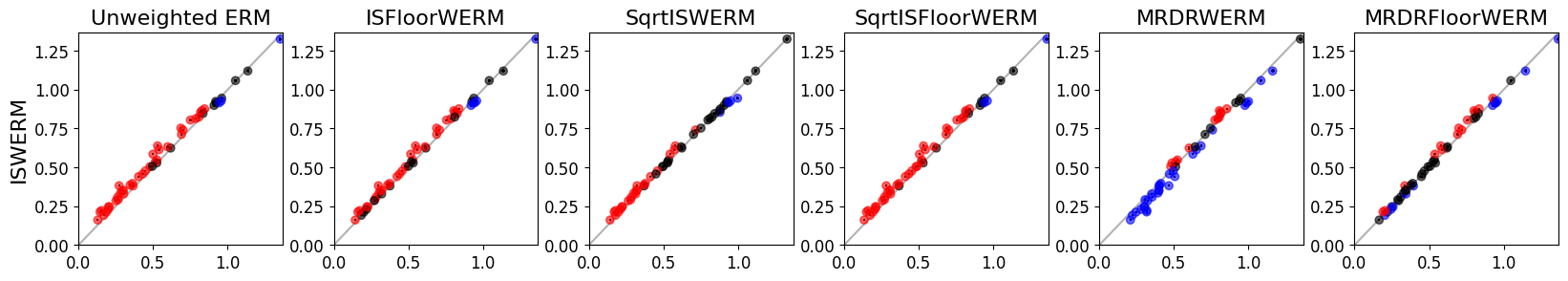}}
  \caption{Comparison of weighted regression run on contextual-bandit-collected data. Each dot is one of 51 OpenML-CC18 datasets. Lines denote $\pm1$ standard error. Dots are blue when ISWERM is clearly better, red when clearly worse, and black when indistinguishable within one standard error.}
    \label{fig:mse-linearbandit}
\end{figure}


\section{Conclusions and Future Work}

We provided first-of-their-kind guarantees for risk minimization from adaptively collected data using ISWERM. Most crucially, our guarantees provided good dependence on the size of IS weights leading to correct convergence rates when exploration diminishes with time, as happens when we collect data using a contextual bandit algorithm. This was made possible by a new maximal inequality specifically for IS weighted sequential empirical processes. There are several important avenues for future work. We focused on a fixed hypothesis class. One important next question is how to do effective model selection in adaptive settings. We also focused on IS weighted regression and policy learning, but recent work in the iid setting highlights the benefits of using doubly-robust-style centering \citep{foster2019orthogonal,kennedy2020optimal,athey2021policy}. These benefits are most important to avoid rate deterioration when IS weights are estimated, while our IS weights are known, but there are still benefits in reducing the loss variance in the leading constant. Therefore, exploring such methods in adaptive settings is another important next question.

\section{Societal Impact}
Our work provides guarantees for learning from adaptively collected data. While the methods (IS weighting) are standard, our novel guarantees lend credibility to the use of adaptive experiments.
Adaptive experiments hold great promise for better, more efficient, and even more ethical experiments.
At the same time, adaptive experiments, especially when all arms are always being explored ($\gamma_t<\infty$) even if at vanishing rates ($\gamma_t=\omega(1)$), must still be subject to the same ethical guidelines as classic randomized experiments regarding favorable risk-benefit ratio of any arm, informed consent, and other protections of participants.
There are also several potential dangers to be aware of in supervised and policy learning generally, such as the training data possibly being unrepresentative of the population to which predictions and policies will be applied leading to potential disparities as well as the focus on \emph{average} welfare compared to prediction error or policy value on each individual or group.
These remain concerns in the adaptive setting, and while ways to tackle these challenges in non-adaptive settings might be applicable in adaptive ones, a rigorous study of applicability requires future work.

\bibliographystyle{plainnat}
\bibliography{AdaptiveERM}

\newpage
\appendix

\begin{center}\large Supplementary Material for:\\{\Large\bf
Risk Minimization from Adaptively Collected Data:\\Guarantees for Supervised and Policy Learning}
\end{center}

\section{Proof of the maximal inequality for IS weighted sequential empirical processes}

\subsection{Preliminary lemmas}

For any sequence $\widetilde{g}_1,\ldots,\widetilde{g}_T$ of conditional densities and any finite sequence $\zeta_{1:T} := (\zeta_t)_{t=1}^T$ of $\cO \to \Rl$ functions, let
\begin{align}
    \rho_{T,\widetilde{g}_{1:T}}(\zeta_{1:T}) := \left( \frac{1}{T} \sum_{t=1}^T \|\zeta_t\|_{2,\widetilde{g}_t}^2 \right)^{1/2}.
\end{align}
For any conditional density $\widetilde{g}:(a,x) \in \mathcal{A} \times \mathcal{X} \mapsto \widetilde{g}(a \mid x)$, let 
\begin{align}
    \rho_{T, \widetilde{g}}(\zeta_{1:T}) := \rho_{T, \widetilde{g}_{1:T}}(\zeta_{1:T}),
\end{align}
where we set $\widetilde{g}_t := \widetilde{g}$ for every $t \in [T]$.

\begin{lemma}\label{lemma:rho_pseudonorm}
Any $\rho_{T,\widetilde{g}_{1:T}}$ as defined above is a pseudonorm over the vector space $(\cO \to \Rl)^T$.
\end{lemma}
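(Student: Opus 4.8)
The plan is to verify the three defining properties of a pseudonorm directly: non-negativity, absolute homogeneity, and the triangle inequality. The central structural observation is that $\rho_{T,\widetilde{g}_{1:T}}$ is a nested $\ell^2$-type aggregation: for each fixed $t$, the map $\zeta \mapsto \|\zeta\|_{2,\widetilde{g}_t}$ is the standard $L^2(P_{\widetilde{g}_t})$ seminorm on $\cO \to \Rl$, and $\rho_{T,\widetilde{g}_{1:T}}$ combines these $T$ seminorms through a normalized Euclidean norm on $\mathbb{R}^T$. With this in mind, non-negativity is immediate, since $\rho$ is the square root of an average of squares, and absolute homogeneity follows by pulling $|\alpha|^2$ out of each term: $\rho_{T,\widetilde{g}_{1:T}}(\alpha \zeta_{1:T})^2 = T^{-1}\sum_{t=1}^T \|\alpha \zeta_t\|_{2,\widetilde{g}_t}^2 = |\alpha|^2 \rho_{T,\widetilde{g}_{1:T}}(\zeta_{1:T})^2$, using homogeneity of each $L^2$ seminorm.

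The substantive step is the triangle inequality, which I would obtain by applying Minkowski's inequality twice. Fix two sequences $\zeta_{1:T}, \eta_{1:T}$ and set $a_t := \|\zeta_t\|_{2,\widetilde{g}_t}$ and $b_t := \|\eta_t\|_{2,\widetilde{g}_t}$. First, the triangle inequality in each space $L^2(P_{\widetilde{g}_t})$ (Minkowski for $p=2$) gives $\|\zeta_t + \eta_t\|_{2,\widetilde{g}_t} \leq a_t + b_t$ for every $t$; squaring and averaging yields $\rho_{T,\widetilde{g}_{1:T}}(\zeta_{1:T} + \eta_{1:T})^2 \leq T^{-1}\sum_{t=1}^T (a_t + b_t)^2$. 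Second, I would apply Minkowski's inequality on $\mathbb{R}^T$ equipped with the scaled Euclidean norm $v \mapsto (T^{-1}\sum_t v_t^2)^{1/2}$ to the vectors $(a_t)$ and $(b_t)$, giving $(T^{-1}\sum_t (a_t+b_t)^2)^{1/2} \leq (T^{-1}\sum_t a_t^2)^{1/2} + (T^{-1}\sum_t b_t^2)^{1/2} = \rho_{T,\widetilde{g}_{1:T}}(\zeta_{1:T}) + \rho_{T,\widetilde{g}_{1:T}}(\eta_{1:T})$. Chaining the two bounds delivers the claim.

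I do not expect a genuine obstacle here, as the argument is a routine seminorm verification; the only points needing care are bookkeeping ones. First, I would note explicitly that each $\|\cdot\|_{2,\widetilde{g}_t}$ is only a seminorm and not a norm, since a function vanishing $P_{\widetilde{g}_t}$-almost everywhere need not be identically zero on $\cO$; this is precisely why $\rho_{T,\widetilde{g}_{1:T}}$ is asserted to be a pseudonorm rather than a norm, so no definiteness property is required. Second, to stay within the stated vector space $(\cO \to \Rl)^T$, I would either restrict attention to sequences with finite $\rho$, or adopt the convention that $\rho$ takes values in $[0,\infty]$, under which both Minkowski steps above remain valid verbatim.
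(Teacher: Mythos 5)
Your proof is correct and follows essentially the same route as the paper's: absolute homogeneity by pulling out the scalar, and the triangle inequality by applying Minkowski first in each $\|\cdot\|_{2,\widetilde{g}_t}$ seminorm and then in the (scaled) Euclidean norm on $\mathbb{R}^T$. The additional bookkeeping remarks about seminorm versus norm are fine but not needed.
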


\begin{proof}[Proof of \cref{lemma:rho_pseudonorm}]
It is immediate that for any real number $\lambda$, and finite sequence $\zeta_{1:T}$ of $\cO \to \Rl$ functions $\rho_{T, \widetilde{g}_{1:T}}(\lambda \zeta_{1:T}) = | \lambda | \rho_{T, \widetilde{g}_{1:T}}(\zeta_{1:T})$.

We now check that $\rho_{T,\widetilde{g}_{1:T}}$ satisfies the triangle inequality. Let $\zeta^{(1)}_{1:T}$ and $\zeta^{(2)}_{1:T}$ be two sequences of $\cO \to \Rl$ functions. We have that
\begin{align}
    \rho_{T, \widetilde{g}_{1:T}}(\zeta^{(1)}_{1:T} + \zeta^{(2)}_{1:T}) =& \left( \frac{1}{T} \sum_{t=1}^T \left\lVert \zeta^{(1)}_t + \zeta^{(2)}_t \right\rVert_{2, \widetilde{g}_t}^2 \right)^{1/2} \\
    \leq& \left( \frac{1}{T} \sum_{t=1}^T \left(\left\lVert \zeta^{(1)}_t \right\rVert_{2,\widetilde{g}_t} + \left\lVert \zeta^{(2)}_t \right\rVert_{2,\widetilde{g}_t} \right)^2 \right)^{1/2} \\
    \leq & \left( \frac{1}{T} \sum_{t=1}^T \left\lVert \zeta^{(1)}_t \right\rVert_{2,\widetilde{g}_t}^2 \right)^{1/2}  +\left( \frac{1}{T} \sum_{t=1}^T \left\lVert \zeta^{(2)}_t \right\rVert_{2,\widetilde{g}_t} \right)^{1/2} \\
    =& \rho_{T, \widetilde{g}_{1:T}}(\zeta^{(1)}_{1:T}) + \rho_{T, \widetilde{g}_{1:T}}(\zeta^{(2)}_{1:T}),
\end{align}
where the second line above follows from the triangle inequality applied to the pseudonorms $\|\cdot\|_{2,\widetilde{g}_t}$, $t=1,\ldots,T$, and where the third line follows from the triangle inequality applied to the Euclidean norm $x \in \Rl^T \mapsto (\sum_{t=1}^T x_t^2)^{1/2}$.
\end{proof}

\begin{lemma}\label{lemma:bound_rho_norm_IS_weigthed_sequence}
Consider $g^*$ and $g_1,\ldots,g_T$ as defined in the main text. Suppose that assumption \ref{asm:exploration} holds. Then, for any finite sequence of functions $(\zeta_t)_{t=1}^T \in (\cO \to \Rl)^T$,
\begin{align}
    \rho_{T,g_{1:T}}\left( \frac{g^*}{g_{1:T}} \zeta_{1:T} \right) \leq \sqrt{\gamma_T^{\max}} \rho_{T,g^*}(\zeta_{1:T}).
\end{align}
If all elements of the sequence $\zeta_t$ are the same, that is, if there exists $\zeta:\cO \to \Rl$ such that $\zeta_t = \zeta$ for every $t \in [T]$, then
\begin{align}
    \rho_{T,g_{1:T}}\left( \frac{g^*}{g_{1:T}} \zeta_{1:T} \right) \leq \sqrt{\gamma^\text{avg}_T} \|\zeta\|_{2,g^*}.
\end{align}
\end{lemma}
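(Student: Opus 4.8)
The plan is to unpack the definition of $\rho_{T,g_{1:T}}$ evaluated at the weighted sequence, compute the per-round $L_2(P_{g_t})$ norm of $(g^*/g_t)\zeta_t$ by writing it as an explicit integral against the density $p_X\times g_t\times p_Y$ of $P_{g_t}$, and then carry out a change of measure that absorbs one of the two powers of the IS ratio into an expectation under $g^*$. First I would write, directly from the definition,
\begin{align*}
\rho_{T,g_{1:T}}\!\left(\tfrac{g^*}{g_{1:T}}\zeta_{1:T}\right)^2=\frac1T\sum_{t=1}^T\Big\|\tfrac{g^*}{g_t}\zeta_t\Big\|_{2,g_t}^2,
\end{align*}
and then treat each summand separately.

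For a fixed $t$, I would expand the squared norm as $\int (g^*(a\mid x)/g_t(a\mid x))^2\,\zeta_t(o)^2\,p_X(x)g_t(a\mid x)p_Y(y\mid x,a)\,d\lambda_\cO(o)$, use the algebraic identity $(g^*/g_t)^2 g_t=(g^*/g_t)\,g^*$ to cancel one factor of $g_t$, and recognize the result as $P_{g^*}[(g^*/g_t)\,\zeta_t^2]$. This is the conceptual heart of the argument: it leaves only a \emph{single} power of the IS ratio inside the $P_{g^*}$-expectation rather than its square. Invoking \cref{asm:exploration}, which gives $g^*/g_t\leq \|g^*/g_t\|_\infty\leq\gamma_t$ almost surely, I would pull the ratio out of the integral to obtain $\|(g^*/g_t)\zeta_t\|_{2,g_t}^2\leq\gamma_t\,\|\zeta_t\|_{2,g^*}^2$, and summing over $t$ yields $\rho_{T,g_{1:T}}(\cdot)^2\leq T^{-1}\sum_{t=1}^T\gamma_t\,\|\zeta_t\|_{2,g^*}^2$.

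From this common bound both claims follow by elementary manipulations. For the first inequality I would bound each $\gamma_t$ by $\gamma_T^{\max}$, factor it out, and identify the remaining average as $\rho_{T,g^*}(\zeta_{1:T})^2$; taking square roots gives the stated bound. For the second, when $\zeta_t\equiv\zeta$ the norm $\|\zeta\|_{2,g^*}^2$ factors out of the sum and the average $T^{-1}\sum_{t=1}^T\gamma_t=\gamma^\text{avg}_T$ appears exactly, again giving the claim after taking square roots. The computation is essentially routine; the only step requiring care—and the main (mild) obstacle—is the change of measure that converts the expectation under the realized policy $g_t$ into one under the reference $g^*$ while spending only a single factor of the IS ratio. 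This is precisely the ``integrate out the IS ratio'' mechanism emphasized in \cref{remark:isstructure}, and it is what makes the bound scale like $\sqrt{\gamma}$ rather than $\gamma$; getting the bookkeeping of the two ratio factors right is the one point that must be handled correctly.
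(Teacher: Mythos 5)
Your proposal is correct and follows essentially the same route as the paper: expand the squared pseudonorm term by term, use the change of measure $P_{g_t}[(g^*/g_t)^2\zeta_t^2]=P_{g^*}[(g^*/g_t)\zeta_t^2]$ to spend only one factor of the IS ratio, bound that ratio by $\gamma_t$ via \cref{asm:exploration}, and then either pull out $\gamma_T^{\max}$ in the general case or factor out $\|\zeta\|_{2,g^*}^2$ to recover $\gamma_T^{\text{avg}}$ in the constant-sequence case. No gaps.
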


\begin{proof}[Proof of \cref{lemma:bound_rho_norm_IS_weigthed_sequence}]
We have that 
\begin{align}
    \rho_{T,g_{1:T}} \left( \frac{g^*}{g_{1:T}}  \zeta_{1:T} \right) =& \left( \frac{1}{T} \sum_{t=1}^T P_{g_t} \left( \frac{g^*}{g_t} \zeta_t \right)^2 \right)^{1/2} \\
    =& \left( \frac{1}{T} \sum_{t=1}^T P_{g^*} \left( \frac{g^*}{g_t} \zeta_t^2 \right) \right)^{1/2} \\
    \leq & \left( \frac{1}{T} \sum_{t=1}^T \gamma_t P_{g^*} \zeta_t^2  \right)^{1/2},
\end{align}
where the inequality follows from \cref{asm:exploration}.
If there exists $\zeta : \cO \to \Rl$ such that $\zeta_t = \zeta$ for every $t=1,\ldots,T$, then, 
\begin{align}
    \rho_{T,g_{1:T}} \leq & \left( \frac{1}{T} \sum_{t=1}^T \gamma_t P_{g^*} \zeta_t^2  \right)^{1/2} \\
    =& \sqrt{\gamma_T^\text{avg}} \left\lVert \zeta \right\rVert_{2,g^*}.
\end{align}
Otherwise, we have
\begin{align}
\rho_{T,g_{1:T}} \leq & \left( \frac{1}{T} \sum_{t=1}^T \gamma_t P_{g^*} \zeta_t^2  \right)^{1/2} \\
\leq & \left( \max_{t \in [T]} \gamma_t \frac{1}{T} \sum_{t=1}^T P_{g^*} \zeta_t^2  \right)^{1/2} \\
=& \sqrt{\gamma_T^{\max}} \rho_{T,\widetilde{g}_{1:T}}(\zeta_{1:T}).
\end{align}
\end{proof}

The following lemma is a restatement under our notation of Corollary A.8 in \citet{van2011minimal}.
\begin{lemma}\label{lemma:max_ineq_finite_class}
Let $\zeta^1_{1:T},\ldots,\zeta^N_{1:T}$ be $N$ $\bar{O}_{1:T}$-predictable sequences of $\cO \to \Rl$ functions, and let $A$ be an $\bO_T$-measurable event. Then, for any $r > 0$ and any $b > 0$ such that $\max_{i \in [N], t \in [T]} \|\zeta^i_t\|_\infty \leq b$, it holds that 
\begin{align}
    & E \left[ \max_{i \in [N]} \frac{1}{T} \sum_{t=1}^N (\delta_{O_t} - P_{g_t}) \zeta^i_{t} \Ind (\rho_{T, g_{1:T}} (\zeta_{1:T}^i) \leq r) \mid A \right] \\
    \lesssim & r \sqrt{\frac{\log (1 + N / P[A])}{t}} + \frac{B}{t} \log (1 + N/ P[A]).
\end{align}
\end{lemma}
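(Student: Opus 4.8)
The plan is to recognize the centered increments
$D^i_t := (\delta_{O_t} - P_{g_t})\zeta^i_t = \zeta^i_t(O_t) - \E[\zeta^i_t(O_t)\mid \bO_{t-1}]$
as a martingale difference sequence with respect to the filtration $(\sigma(\bO_t))_{t}$, using that each $\zeta^i_t$ is $\bO_{t-1}$-measurable. Then I would apply a Freedman-type martingale Bernstein inequality for each fixed $i$, take a union bound over the finite index set $[N]$, pass to the conditional law given $A$, and integrate the resulting tail bound to control the conditional expectation.

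First I would record the two deterministic control quantities that feed Freedman's inequality. The increments are bounded, $|D^i_t| \leq 2b$ almost surely, by the hypothesis $\max_{i,t}\|\zeta^i_t\|_\infty \leq b$ and centering. The predictable quadratic variation satisfies $\sum_{t=1}^T \E[(D^i_t)^2\mid \bO_{t-1}] \leq \sum_{t=1}^T P_{g_t}(\zeta^i_t)^2 = T\,\rho_{T,g_{1:T}}(\zeta^i_{1:T})^2$. Crucially, since each $\|\zeta^i_t\|_{2,g_t}^2 = P_{g_t}(\zeta^i_t)^2$ is $\bO_{t-1}$-measurable (both $\zeta^i_t$ and $g_t$ are), the event $\{\rho_{T,g_{1:T}}(\zeta^i_{1:T})\leq r\}$ is $\bO_{T-1}$-measurable, and on it the quadratic variation is at most $Tr^2$. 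This is exactly the localization that lets the variance-restricted form of Freedman's inequality apply: for each $i$ and $s>0$,
$$
\Pr\!\left( \tfrac1T\sum_{t=1}^T D^i_t \geq s,\ \rho_{T,g_{1:T}}(\zeta^i_{1:T})\leq r \right)
\leq \exp\!\left(-\frac{T s^2}{2(r^2 + 2bs/3)}\right).
$$

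Then I would union-bound over $i\in[N]$ and pass to the conditional probability given $A$ via the elementary bound $\Pr(E\mid A)\leq \Pr(E)/\Pr[A]$, valid for any event $E$ when $\Pr[A]>0$. Writing $Z := \max_{i\in[N]} \tfrac1T\sum_{t=1}^T D^i_t\,\Ind(\rho_{T,g_{1:T}}(\zeta^i_{1:T})\leq r)$, this yields for all $s>0$
$$
\Pr(Z\geq s\mid A)\leq \min\!\left(1,\ \frac{N}{\Pr[A]}\exp\!\left(-\frac{T s^2}{2(r^2+2bs/3)}\right)\right).
$$
Finally I would bound $\E[Z\mid A]\leq \int_0^\infty \Pr(Z\geq s\mid A)\,ds$ (using $Z\leq Z_+$) and evaluate the tail integral in the standard way, splitting the exponent into its sub-Gaussian part (governed by $r^2$) and its sub-exponential part (governed by $2bs/3$); choosing the crossover threshold at scale $\max\!\big(r\sqrt{\log(1+N/\Pr[A])/T},\ \tfrac{b}{T}\log(1+N/\Pr[A])\big)$ produces the two claimed terms $r\sqrt{\log(1+N/\Pr[A])/T}$ and $\tfrac{b}{T}\log(1+N/\Pr[A])$.

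The main obstacle is making the localization and the conditioning interact correctly. One must verify that $\{\rho\leq r\}$ is predictable so that the variance-localized form of Freedman's inequality applies cleanly, and the conditioning on an arbitrary $\bO_T$-measurable $A$ has to be handled by the crude ratio bound rather than re-deriving concentration under $\Pr(\cdot\mid A)$: under the conditional law the $D^i_t$ are no longer martingale differences, so all concentration must be carried out unconditionally and only converted to a conditional statement at the union-bound step. The remaining tail integral is routine but must be tracked carefully to land the precise $\sqrt{\log(1+N/\Pr[A])/T}$ and $\tfrac1T\log(1+N/\Pr[A])$ dependence asserted in the statement (with $B$ there reading as the increment bound $b$).
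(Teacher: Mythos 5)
Your proof is correct, but note that the paper does not actually prove this lemma: it is imported verbatim (as announced in the sentence preceding it) from Corollary A.8 of the cited reference, so there is no in-paper argument to compare against. Your derivation --- Freedman's martingale Bernstein inequality with the variance-localized event $\{V^i_T \le T r^2\}$, a union bound over $[N]$, the crude conversion $P(E \mid A) \le P(E)/P[A]$, and integration of the resulting conditional tail --- is a correct, self-contained way to establish the claim, and it is essentially how such conditional maximal inequalities are proved in that literature. You correctly isolate the two points where care is needed: the indicator $\Ind(\rho_{T,g_{1:T}}(\zeta^i_{1:T}) \le r)$ places you on the joint event $\{\frac{1}{T}\sum_t D^i_t \ge s\} \cap \{V^i_T \le T r^2\}$ (since the predictable quadratic variation is dominated by $\sum_t P_{g_t}(\zeta^i_t)^2$), which is exactly the event Freedman's inequality controls without any extra predictability requirement on $\{\rho \le r\}$; and the conditioning on an arbitrary $\bO_T$-measurable $A$ can only be paid for through the $1/P[A]$ factor, because the increments are no longer martingale differences under the conditional law. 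A marginally slicker route, closer to how the cited corollary is obtained, bounds $E[\max_i S^i \mid A]$ directly by $\lambda^{-1}\log\bigl(N \sup_i E[e^{\lambda S^i}]/P[A]\bigr)$ using the exponential supermartingale $e^{\lambda S^i_n - \psi(\lambda) V^i_n}$ and then optimizes over $\lambda$, which avoids the tail integration; both yield the same two terms (and you are right that the statement's $B$ and lowercase $t$ should read $b$ and $T$).
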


\subsection{Proof of \cref{thm:max_ineq_IS_weighted_MEP}}

\begin{proof}[Proof of \cref{thm:max_ineq_IS_weighted_MEP}]

We treat together both the general case where, for each $f$, $\xi_{1:T}(f)$ is an $\bar{O}_{1:T}$-predictable sequence, and the case where, for every $f$, there exists a deterministic $\xi(f):\cO \to \Rl$ such that $\xi_t(f) = \xi(f)$ for every $t \in [T]$. We refer to the former as \textit{case 1} and to the latter as \textit{case 2} in the rest of the proof. In case 1, we let $\widetilde{\rho}_T := \rho_T^{\max}$, and in case 2, we let $\widetilde{\rho}_T := \bar{\rho}_T$.

\paragraph{From a conditional expectation bound to a high probability bound.} 

Let $x > 0$. We introduce the following event:
\begin{align}
    A := \left\lbrace \sup_{f \in \mathcal{F}} M_T(f) \geq \psi(x) \right\rbrace,
\end{align}
where 
\begin{align}
    \psi(x) := & C \left\lbrace r^- + \sqrt{\frac{\widetilde{\gamma}_T}{T}} \int_{r^-}^r \sqrt{\log (1 + \mathcal{N}_{[\,]}(\epsilon, \Xi_T, \rho_{T,g^*}))} d\epsilon  \right.\\
    & \qquad + \frac{B \gamma^{\max}_T}{ T} \log (1 + \mathcal{N}_{[\,]}(r, \Xi_T, \rho_{T,g^*})) \\
    & \qquad \left. r \sqrt{\frac{x}{T}} + \frac{\gamma^{\max}_T x}{ T} \right\rbrace,
\end{align}
where $C$ is a universal constant to be discussed further down.
Suppose we can show that 
\begin{align}
    E \left[ \sup_{f \in \mathcal{F}} M_T(f) \mid A \right] \leq \psi\left( \log \left( 1 + \frac{1}{P[A]} \right) \right).
\end{align}
Then, we will have that $\psi(x) \leq \psi( \log (2 / P[A]))$, that is $P[A] \leq 2 e^{-x}$, which is the wished claim.

\paragraph{Setting up the chaining decomposition.} Let $\epsilon_0 := r$, and, for every $j \geq 0$, let $\epsilon_j := \epsilon_0 2^{-j}$. For any $j \geq 0$, let
\begin{align}
    \mathcal{B}^j := \left\lbrace (\lambda_s^{j,k}, \upsilon^{j,k}_t)_{t=1}^T : k \in [N_j] \right\rbrace
\end{align}
 be a minimal $(\epsilon_j, \rho_{T,g^*})$-sequential bracketing of $\Xi_T$. For any $f \in \mathcal{F}$, let $k(j,f) \in [N_j]$ be such that 
 \begin{align}
     \lambda_s^{j,k(j,f)} \leq \xi_t(f) \leq \upsilon^{j,k(j,f)} \text{ for every } t \in [T],
 \end{align}
 and let $\Delta_t^{j,f} := \upsilon_s^{j,k(j,f)} - \lambda_s^{j,k(j,f)}$ and $u^{j,f} := \upsilon_s^{j,k(j,f)}$. For any $j \geq 0$, let $\bar{N}_j := \prod_{i=0}^j N_i$. For any $j \geq 0$, and $t \in [T]$ let
 \begin{align}
     a_{j,t} := \epsilon_j \sqrt{\frac{T}{ \log (1 + \bar{N}_j / P[A])}} \frac{\sqrt{\widetilde{\gamma}_T}}{\gamma_t}.
 \end{align}
 Let $J \geq 0$ such that $\epsilon_{J+1} < r^- \leq \epsilon_J$. The integer $J$ will be the maximal depth of the chains in our chaining decomposition. For any $t \in [T]$, $f \in \mathcal{F}$, let 
 \begin{align}
     \tau_t(f) := \inf \left\lbrace j \geq 0 : \Delta_t^{j,f} > a_{j,t} \right\rbrace \wedge J,
 \end{align}
be the depth at which we truncate the chains, adaptively depending on the value of $\Delta_t^{j,f}$, so that $\Delta_t^{j,f} \Ind(\tau_t(f) > j)$ is no larger than $a_{j,t}$ in supremum norm at any depth $j$.

For any $f \in \mathcal{F}$ and any $t \in [T]$, the following chaining decomposition holds:
\begin{align}
    \xi_t(f) =& \underbrace{\sum_{j=0}^J (\xi_t(f) - u^{j,f} \wedge u^{j-1,f}) \Ind (\tau_t(f) = j )}_{\text{tip of the chain}} \\
    &+ \underbrace{\sum_{j=1}^J \left\lbrace (u^{j,f} \wedge u^{j-1,f} - u^{j-1,f})\Ind(\tau_t(f) = j) + (u^{j,f} - u^{j-1,f}) \Ind(\tau_t(f) > j) \right\rbrace }_{\text{links of the chain}} \\
    &+ \underbrace{u^{0,f}_t}_{\text{root of then chain}}.
\end{align}

\paragraph{Control of the tips.} 
\begin{itemize}
    \item \textbf{Case $j=J$.} We have that
    \begin{align}
        &\frac{1}{T} \sum_{t=1}^T (\delta_{O_t} - P_{g_t}) \frac{g^*}{g_t}(\xi_t(f) - u_t^{J,f} \wedge u_t^{J-1,f}) \Ind(\tau_t(f) = J) \\
        \leq & \frac{1}{T} \sum_{t=1}^T P_{g_t} \frac{g^*}{g_t} \Delta_t^{J,f} \\
        =& \frac{1}{T} \sum_{t=1}^T \|\Delta_t^{J,f}\|_{1,g^*} \\
        \leq & \left( \frac{1}{T} \sum_{t=1}^T \|\Delta_t^{J,f}\|_{2,g^*}^2 \right)^{1/2} \\
        \leq & \epsilon_J.
    \end{align}
    Therefore 
    \begin{align}
        E \left[ \sup_{f \in \mathcal{F}} \frac{1}{T} \sum_{t=1}^T (\delta_{O_t - P_{g_t}} \frac{g^*}{g_t})(\xi_t(f) - u_t^{J,f} \wedge u_t^{J-1,f}) \Ind(\tau_t(f) = J) \mid A \right] \leq \epsilon_J.
    \end{align}
    \item \textbf{Case $j < J$.}
    \begin{align}
         &\frac{1}{T} \sum_{t=1}^T (\delta_{O_t} - P_{g_t}) \frac{g^*}{g_t}(\xi_t(f) - u_t^{j,f} \wedge u_t^{j-1,f}) \Ind(\tau_t(f) = j) \\
          \leq & \frac{1}{T} \sum_{t=1}^T P_{g_t} \frac{g^*}{g_t} \Delta_t^{j,f} \Ind(\tau_t(f) = j) \\
          \leq & \frac{1}{T} \sum_{t=1}^T P_{g^*} \frac{(\Delta_t^{j, f})^2}{a_{j,t}} \\
          \leq & \epsilon_j^2 \frac{1}{T} \sum_{t=1}^T \frac{1}{a_{j,t}} \\
          =& \epsilon_j \sqrt{\frac{\log (1 + \bar{N}_j / P[A])}{T}} \frac{1}{\sqrt{\widetilde{\gamma}_T}} \frac{1}{T} \sum_{t=1} \gamma_t \\\
          \leq & \epsilon_j \sqrt{ \frac{\widetilde{\gamma}_T \log (1 + \bar{N}_j / P[A])}{T}}.
    \end{align}
    (The last inequality is an equality in \textit{case 2}).
\end{itemize}

\paragraph{Control of the links.} We start with bounding the $\rho_{T, g_{1:T}}$ pseudo-norm of the IS weighted links.
We have that 
\begin{align}
    & \rho_{T,g_{1:T}}\left( \left(\frac{g^*}{g_t}(u^{j,f}_t\wedge u^{j-1,f}_t - u^{j-1,f}_t \right)_{t=1}^T\right) \\
    \leq & \rho_{T,g_{1:T}}\left( \left(\frac{g^*}{g_t}(u^{j,f}_t - u^{j-1,f}_t \right)_{t=1}^T\right) \\
    \leq & \sqrt{\widetilde{\gamma}_T} \rho_{T,g^*}\left( u^{j,f}_{1:T} - u^{j-1,f}_{1:T} \right) \\
    \leq & \sqrt{\widetilde{\gamma}_T} \left\lbrace \rho_{T,g^*}\left( u^{j,f}_{1:T} - \xi_{1:T}(f) \right) +   \rho_{T,g^*}\left(\xi_{1:T}(f) - u^{j-1,f}_{1:T} \right) \right\rbrace\\
    \lesssim  &\sqrt{\widetilde{\gamma}_T} \epsilon_j,
\end{align}
where we have used lemma \ref{lemma:bound_rho_norm_IS_weigthed_sequence} is the third line and where the fourth line above follows from the triangle inequality.

We now bound the supremum norm of the links. For every $t \in [T]$,
\begin{align}
    &(u_t^{j,f} \wedge u_t^{j-1, f} - u_t^{j,f}) \Ind(\tau_t(f) = j) \\
    =& (u_t^{j,f} \wedge u_t^{j-1, f} - \xi_t(f)) \Ind(\tau_t(f) = j) \\
    &-  ( u_t^{j-1, f} - \xi(f)) \Ind(\tau_t(f) = j).
\end{align}
Using the definition of $\tau_t(f)$, we obtain
\begin{align}
    0 \leq  (u_t^{j,f} \wedge u_t^{j-1,f} - \xi_t(f) ) \Ind(\tau_t(f) = j) 
    \leq  (u_t^{j-1, f} - \xi_t(f)) \Ind(\tau_t(f) = j) 
    \leq  a_{j-1,t} 
    \lesssim  a_{j,t},
\end{align}
and 
\begin{align}
    0 \leq (u_t^{j-1,f} - \xi_t(f)) \Ind(\tau_t(f)=j) \leq a_{j-1,t} \lesssim a_{j-1,t}.
\end{align}
Therefore,
\begin{align}
    \max_{t\in [T]} \left\lVert \frac{g^*}{g_t} \left(u_t^{j,f} \wedge u_t^{j-1,f} - u_t^{j-1,f}\right)\Ind(\tau_t(f) = j) \right\rVert_\infty \lesssim \gamma_t a_{j,t}=  b_j
\end{align}
where 
\begin{align}
    b_j := \epsilon_j \sqrt{\frac{T \widetilde{\gamma}_T}{\log (1 + \bar{N}_j / P[A])}}
\end{align}
Similarly, we have
\begin{align}
    0 \leq (u_t^{j,f} - \xi_t(f)) \Ind(\tau_t(f) > j) \leq a_{j,t} 
    \qquad \text{and} \qquad 0 \leq (u_t^{j-1,f} - \xi_t(f)) \Ind(\tau_t(f) > j) \leq a_{j-1,t},
\end{align}
and therefore, for every $t \in [T]$
\begin{align}
   \left\lVert \frac{g^*}{g_t} \left( u_t^{j-1,f} - u_t^{j-1,f}\right)\Ind(\tau_t(f) > j) \right\rVert_\infty \lesssim  \gamma_t a_{j,t} = b_j
\end{align}
Denote
\begin{align}
    v_t^{j,f} := \frac{g^*}{g_t} \left\lbrace (u_t^{j,f} \wedge u_t^{j-1,f} - u_t^{j,f})\Ind(\tau_t(f) = j)  + ( u_t^{j,f} - u_t^{j-1,f})\Ind(\tau_t(f) > j)\right\rbrace.
\end{align}
Observe that as $f$ varies over $\mathcal{F}$, $v_{1:T}^{j,f}$ varies over a collection of at most $N_j \times N_{j-1} \leq \bar{N}_j$ elements. Therefore, lemma \ref{lemma:max_ineq_finite_class} yields
\begin{align}
    &E \left[\sup_{f \in \mathcal{F}} \frac{1}{T} \sum_{t=1}^T (\delta_{O_t} - P_{g_t}) \frac{g^*}{g_t} v_t^{j,f} \right] \\
    \lesssim &\epsilon_j \sqrt{\frac{\widetilde{\gamma}_T \log (1 + \bar{N}_j / P[A])}{ T}} + \frac{b_j}{T} \log (1 + \bar{N}_j / P[A]) \\
    \lesssim &\epsilon_j \sqrt{\frac{\widetilde{\gamma}_T \log (1 + \bar{N}_j / P[A])}{ T}}.
\end{align}

\paragraph{Control of the root.}
For any $f$ such that $\rho_{T,g^*}((\xi_t(f))_{t=1}^T) \leq r$, we have that
\begin{align}
    &\rho_{T,g_{1:T}} (((g^* / g_t) u^{0,f}_t)_{t=1}^T) \\
    \leq & \sqrt{\widetilde{\gamma}_T} \rho_{T,g^*} (u^{0,f}_{1:T}) \\
    \leq &\sqrt{\widetilde{\gamma}_T} (\rho_{T,g^*} (u^{0,f}_{1:T} - \xi_{1:T}(f)) + \rho_{T,g^*}(\xi_{1:T}(f)).
\end{align}
Without loss of generality, we can assume that $\max_{t \in [T]} \| u^{0,f}_t\|_\infty \leq B$, since thresholding to $B$ preserves the bracketing property. Therefore, $\max_{t \in [T]} \| (g^* / g_t) u^{0,f}_t\|_\infty \leq \gamma_T^{\max} B\epsilon$.

Then, from lemma \ref{lemma:max_ineq_finite_class}, 
\begin{align}
    &E \left[ \sup \left\lbrace \frac{1}{T} \sum_{t=1}^T (\delta_{O_t} - P_{g_t}) \xi_t(f) : f \in \mathcal{F}, \rho_{T, g^*}((\xi_t(f))_{t=1}^T) \leq r \right\rbrace \right] \\
    \leq & \sqrt{\frac{\widetilde{\gamma}_T}{T}} \sqrt{\log \left((1 + \frac{\bar{N}_0}{P[A]}\right)} + \frac{B \gamma_T^{\max}}{T} \log \left(1 + \frac{\bar{N}_0}{P[A]}\right)
\end{align}

\paragraph{Adding up the bounds.}

We obtain
\begin{align}
    E \left[ \sup_{f \in \mathcal{F}} M_T(f) \mid A\right] \lesssim & \underbrace{\sqrt{\frac{\widetilde{\gamma}_T}{T}} \sqrt{\log \left((1 + \frac{\bar{N}_0}{P[A]}\right)} + \frac{B}{\delta T} \log \left(1 + \frac{\bar{N}_0}{P[A]}\right)}_{\text{root contribution}} \\
    &+ \underbrace{\sqrt{\frac{\widetilde{\gamma}_T}{T}}\sum_{j=1}^J \epsilon_j \log \left( 1 + \frac{\bar{N}_j}{P[A]} \right)}_{\text{links contribution}} \\
    &+ \underbrace{\sqrt{\frac{\widetilde{\gamma}_T}{T}}\sum_{j=0}^{J-1} \epsilon_j \log \left( 1 + \frac{\bar{N}_j}{P[A]} \right) + \epsilon_J}_{\text{tip contribution}} \\
    \lesssim & \epsilon_J + \sqrt{\frac{\widetilde{\gamma}_T}{T}} \sum_{j=0}^J \epsilon_j \log \left( 1 + \frac{\bar{N}_j}{P[A]} \right) + \frac{B \gamma_T^{\max}}{ T} \log \left(1 + \frac{\bar{N}_0}{P[A]}\right). 
\end{align}

We use the classical technique from finite adaptive chaining proofs to bound the sum in the second term with an integral \citep[see e.g.][]{van2011minimal, bibaut2020}. We obtain
\begin{align}
    \sum_{j=0}^J \epsilon_j \log \left( 1 + \frac{\bar{N}_j}{P[A]} \right) \lesssim \int_{r^-}^r \sqrt{\log (1 + N_{[\,]}(\epsilon, \Xi_T, \rho_{T,g^*}))} d\epsilon + \log \left(1 + \frac{1}{P[A]}\right).
\end{align}
Therefore,
\begin{align}
    E \left[ \sup_{f \in \mathcal{F}} M_T(f) \mid A\right] \lesssim & r^- + \sqrt{\frac{\widetilde{\gamma}_T}{T}} \int_{r^-}^r \sqrt{\log (1 + N_{[\,]}(\epsilon, \Xi_T, \rho_{T,g^*}))} d\epsilon\\
    &+ \frac{B \gamma_T^{\max}}{ T} \log (1 + N_{[\,]}(r, \Xi_T, \rho_{T,g^*}))\\
    &+ \sqrt{\frac{\widetilde{\gamma}_T}{T}} \sqrt{\log \left(1 + \frac{1}{P[A]} \right)} + \frac{B \gamma_T^{\max}}{ T} \log \left(1 + \frac{1}{P[A]} \right).
\end{align}
Therefore, for an appropriate choice of the universal constant $C$ in the definition of $\psi$, we have that 
\begin{align}
    E \left[ \sup_{f \in \mathcal{F}} M_T(f) \mid A\right] \leq \psi\left( \log \left(1 + \frac{1}{P[A]}\right) \right),
\end{align}
which, from the first paragraph of the proof, implies the wished claim.
\end{proof}

\section{Proof of the excess risk bounds for ISWERM}

\subsection{Proof of \cref{thm:ISWERMslow}}

\begin{proof}[Proof of \cref{thm:ISWERMslow}]
Let 
\begin{align}
    M_T(f) := \frac{1}{T} \sum_{t=1}^T (P_{g_t} - \delta_{O_t}) (\ell(f) - \ell(f_1)).
\end{align}
Since $\widehat{R}_T(\widehat{f}_T) - \widehat{R}_T(f_1) \leq 0$, and from the diameter assumption \ref{asm:diameter}, we have that $R^*(\widehat{f}_T) - R^*(f_1) \leq \sup \{M_T(f) : f \in \F, \|\ell(f) - \ell(f_1) \|_{2,g^*} \leq \rho_0 \| \Lambda \|_{2,g^*}\}$. Therefore, from the diameter assumption (\cref{asm:diameter}), \cref{thm:max_ineq_IS_weighted_MEP} yields, via the change of variable $r = \rho \|\Lambda\|_{2,g^*}$, for any $x > 0$, $\rho^- \in [0, \rho_0/2]$, that it holds with probability at least $1 - 2 e^{-x}$ that
\begin{align}
    R^*(\widehat{f}_T) - R^*(f_1) \leq \|\Lambda\|_{2,g^*} &\left\lbrace \rho^- +\sqrt{\frac{\gamma^{\text{avg}}_T}{T}} \int_{\rho^-}^{\rho_0} \sqrt{\log (1 + \Nb(\epsilon \|\Lambda\|_{2,g^*}, \ell(\F), \|\cdot\|_{2,g^*}} d \epsilon \right. \\
    &+ \frac{b_0 \gamma_T^{\max}}{T} \log (1 + \Nb(\rho_0 \|\Lambda\|_{2,g^*}, \ell(\F), \|\cdot\|_{2,g^*}))\\
    &\left. + \sqrt{\frac{\gamma^{\text{avg}}_T x}{T}} + \frac{b_0 \gamma_T^{\max} x}{T} \right\rbrace.
\end{align}
In the case $p \in (0,2)$, setting $\rho^- = 0$ and $x = \log(1 / \delta)$, and plugging in the entropy assumption (\cref{asm:entropy}) immediately yield the claim. In the case $p > 2$, setting $x= \log(1 / \delta)$, plugging in the entropy assumption and optimizing the value of $\rho^-$ yields the claim.
\end{proof}

\subsection{Proof of \cref{thm:ISWERMfast}}

\begin{proof}
From convexity of $f \mapsto \ell(f, \cdot)$ and of $\F$, the following implication holds, for any $r > 0$:
\begin{align}
    & \exists f \in \F,\ R^*(f) - R^*(f_1) \geq r^2 \qquad \text{and} \widehat{R}_T(f) - \widehat{R}_T(f_1) \leq 0\\
    \implies &\exists f \in \F,\ R^*(f) - R^*(f_1) = r^2 \qquad \text{and} \widehat{R}_T(f) - \widehat{R}_T(f_1) \leq 0.
\end{align}
Let 
\begin{align}
    M_T(f) := \frac{1}{T} \sum_{t=1}^T (P_{g_t} - \delta_{O_t}) (\ell(f) - \ell(f_1)).
\end{align}
Let $\rho > 0$. Since $\widehat{R}_T(\widehat{f}_T) - \widehat{R}_T(f_1) \leq 0$, we have that
\begin{align}
    & P \left[ R^*(\widehat{f}_T) - R^*(f_1) \geq \rho^2 \|\Lambda\|_{2,g^*} \|_{2,g^*} \right] \\
    \leq & P \left[ \exists f \in \F, R^*(f) - R^*(f_1) = \rho^2 \| \Lambda \|_{2,g^*}  \text{ and } \widehat{R}_T(f) - \widehat{R}_T(f_1) \leq 0 \right] \\
    \leq & P \left[ \exists f \in \F, \sup \left\lbrace M_T(f) : f \in \F, \|\ell(f) - \ell(f_1) \|_{2,g^*}  \right\rbrace \lesssim \|\Lambda \|_{2,g^*} \rho^\alpha \right],
\end{align}
where we have used the variance bound in the last line (\cref{asm:variance}). From theorem \ref{thm:max_ineq_IS_weighted_MEP} and the loss diameters assumption \ref{asm:diameter}, we have, for any $x, \rho > 0$, that it holds with probability at least $1 - 2 e^{-x}$ that
\begin{align}
    \sup \left\lbrace M_T(f) : f \in \F, \|\ell(f) - \ell(f_1) \|_{2,g^*}  \right\rbrace \lesssim \psi_T(\rho),
\end{align}
with 
\begin{align}
    \psi_T(\rho) := \| \Lambda \|_{2,g^*} &\left\lbrace \sqrt{\frac{\gamma^\text{avg}_T}{T}}  \int_0^{\rho^\alpha} \sqrt{ \log ( 1 + \Nb(\epsilon \|\Lambda \|_{2,g^*}, \ell(\F), \|\cdot\|_{2,g^*}))} d \epsilon \right. \\
    &+ \frac{b_0 \gamma^{\max}_T}{T} \log (1 + \Nb(\rho^\alpha \|\Lambda \|_{2,g^*}, \ell(\F), \|\cdot\|_{2,g^*}) \\
    &+ \left. \rho^\alpha \sqrt{\frac{\gamma^{\text{avg}}_T x}{T}} +  \frac{b_0 \gamma^{\max}_T x}{T} \right\rbrace.
\end{align}
Therefore, if $\rho$ is such that $\rho^2 \|\Lambda \|_{2,g^*} \geq \psi_T(\rho)$, then with probability at least $1 - 2 e^{-x}$, 
\begin{align}
    R^*(\widehat{f}_T) - R^*(f_1) \lesssim \rho^2 \|\Lambda \|_{2,g^*}.
\end{align}
We therefore compute an upper bound on $\psi_T(\rho)$. From the entropy assumption (\ref{asm:entropy}), we have that
\begin{align}
    \psi_T(\rho) \lesssim \|\Lambda \|_{2,g^*} \left\lbrace \sqrt{\frac{\gamma^{\text{avg}}_T}{T}} \rho^{\alpha(1-p/2)} + \frac{b_0 \gamma^{\max}_T}{T} \rho^{-p \alpha} + \rho^\alpha \sqrt{\frac{\gamma^{\text{avg}}_Tx}{T}} + \frac{b_0 \gamma^{\max}_T}{T} x\right\rbrace.
\end{align}
Therefore, a sufficient condition for $\rho^2 \|\Lambda \|_{2,g^*} \geq \psi_T(\rho)$ is that 
\begin{align}
    \rho^2 \geq \max \left\lbrace \sqrt{\frac{\gamma^{\text{avg}}_T}{T}} \rho^{\alpha(1-p/2)},\frac{b_0 \gamma^{\max}_T}{T} \rho^{-p \alpha},\rho^\alpha \sqrt{\frac{\gamma^{\text{avg}}_Tx}{T}},\frac{b_0 \gamma^{\max}_T}{T} x\right\rbrace
\end{align}
that is
\begin{align}
    \rho^2 \geq \max \left\lbrace \left(\frac{\gamma^{\text{avg}}_T}{T} \right)^{\frac{1}{2-\alpha + p \alpha /2}}, \left(\frac{\gamma^{\text{avg}}_T}{T} \right)^{\frac{1}{2-\alpha}}, \left( \frac{b_0 \gamma^{\max}_T}{T}\right)^{\frac{1}{1 + p \alpha /2}}, \frac{b_0 \gamma^{\max}_T x}{T}\right\rbrace,
\end{align}
which immediately implies the wished claim.
\end{proof}

\section{Proof of the results on least squares regression using ISWERM}

\begin{proof}[Proof of lemma \ref{lemma:properties_square_loss}]
For any $o = (x,a,y) \in \cO$, $f, f': \cO \to \Rl$, we have 
\begin{align}
    | \ell(f)(o) - \ell(f')(o) | =& |2y - f(a,x) - f'(a,x)||f(a,x) - f_1(a,x)| \\
    \leq & 4 \sqrt{M} | f(a,x) - f_1(a,x)|,
\end{align}
which is the second claim.
This inequality further gives that, for any $f \in \F$
\begin{align}
    \left\lVert \ell(f) - \ell(f_1) \right\rVert_{2,g^*} \leq 4 \sqrt{M} \left\lVert f - f_1\right\rVert_{2,g^*}.
\end{align}
We now show that $\|f - f_1 \|_{2,g^*} \leq R^*(f) - R^*(f_1)$. Recall the definition of $\mu$: for any $(a,x) \in \A, \X$, $\mu(a,x) := E_{p_Y}[Y \mid A=a, X=x]$. From Pythagoras, $R^*(f) = E[(Y - \mu(A,X))^2] + \|\mu - f\|_{2,g^*}^2$. For any $h_1, h_2: \A \times \X \to \Rl$, denote $\langle h_1, h_2 \rangle := E_{p_X,g^*}[h_1(A,X) h_2(A,X)]$. We have that
\begin{align}
    & R^*(f) - R^*(f_1) - \left\lVert f - f_1 \right\rVert_{2,g^*} \\
    =& \left\lVert f - \mu \right\rVert_{2,g^*}^2 - \left\lVert f_1 - \mu \right\rVert_{2,g^*}^2 - \left\lVert f-f_1\right\rVert_{2,g^*}^2 \\
    =& \langle f - f_1, f_1 - \mu \rangle \\
    \geq & 0,
\end{align}
since $f_1$ is the projection for $\langle \cdot,\cdot \rangle$ of $\mu$ onto the convex set $\F$. This yields the first claim.
\end{proof}

\begin{proof}[Proof of \cref{thm:iswls}]
From the definition of the range of the outcome and of the regression functions, $o \mapsto \sqrt{M}$ is an envelope for $\F$ and $o \mapsto 4 M$ is an envelope for $\ell(\F)$. From \cref{lemma:bkting_preservation_long_version}, and the fact that $\ell$ is $4 \sqrt{M}$-equiLipschitz w.r.t. its first argument, 
\begin{align}
    \Nb(4 M \epsilon, \ell(\F), \|\cdot\|_{2,g^*}) \lesssim \Nb(\sqrt{M} \epsilon, \F, \|\cdot\|_{2,g^*}) \lesssim \epsilon^{-p},
\end{align}
where the last inequality follows from the fact that \cref{asm:entropy2} holds for $\F$ with envelope $o \mapsto \sqrt{M}$. Therefore, \cref{asm:entropy} holds for envelope $\Lambda : o \mapsto 4 M$. In addition, for this envelop definition, \cref{asm:diameter} holds with $\rho_0 = b_0 = 1$. Finally, from lemma \cref{lemma:properties_square_loss},
\begin{align}
    \left\lVert \ell(f) - \ell(f_1) \right\rVert_{2,g^*} \leq 4 \sqrt{M} ( R^*(f) - R^*(f_1))^{1/2} \\
    = 2 (4 M) \left( \frac{ R^*(f) - R^*(f_1) }{4 M} \right)^{\frac{1}{2}},
\end{align}
that is \cref{asm:variance} holds.
\cref{thm:iswls} then follows directly by instantiating \cref{thm:ISWERMslow} and \cref{thm:ISWERMfast}, respectively in the case $p> 2$ and in the case $p \in (0,2)$, with $\Lambda : o \mapsto 4 M$, $\alpha = 1$, $b_0 = \rho_0 = 1$.
\end{proof}

\section{Proof of the results on policy learning using ISWERM}

\begin{proof}[Proof of \cref{thm:policyslow} and \cref{thm:policyfast}]
Note that since the outcome has range $[-M,M]$, $\ell$ is $M$-equiLipschitz w.r.t. its first argument. Therefore, from \cref{lemma:bkting_preservation_long_version} and the fact that $\F$ satisfies \cref{asm:entropy2} with envelope constant equal to 1, \cref{asm:entropy} holds with envelope $\Lambda : o \mapsto M$.

Furthermore, \cref{asm:diameter} holds for $b_0 = \rho_0 = 1$. Therefore, instantiating \cref{thm:ISWERMslow} with $\Lambda =M$, $\rho_0 = b_0 = 1$ yields \cref{thm:policyslow}. 

Under realizability and \cref{asm:margin}, \cref{lemma:vb_from_margin_cond}, gives us that \cref{asm:variance} holds for $\alpha = \nu / (\nu + 1)$. \cref{thm:policyfast} follows by instantiating \cref{thm:policyfast} with $\alpha= \nu / (\nu + 1)$, $b_0 = \rho_0 = 1$.
\end{proof}

\section{Technical lemmas}

\subsection{Long version of lemma 4 in \cite{bibaut2019fast}}

We restate here under our notation the full version of lemma 4 in \cite{bibaut2019fast}, of which we gave a short version under the form of lemma \ref{lemma:bkting_unimodal_loss}. 

\begin{lemma}[Lemma 4 in \cite{bibaut2019fast}, long version]\label{lemma:bkting_preservation_long_version}
Let $\ell: \mathcal{F} \times \mathcal{O} \to \mathbb{R}$. Suppose that there exists $\widetilde{\ell}: \mathbb{R} \times \mathcal{O} \to \mathbb{R}$ such that
\begin{itemize}
    \item it holds that $\forall f:\cO \to \mathbb{R}, o \in \cO, \ell(f, o) = \widetilde{\ell}(f(o), o)$,
    \item $\widetilde{\ell}$ is $L$-equiLipschitz w.r.t. its first argument, that is, \begin{align}
        |\widetilde{\ell}(z_2,o) - \widetilde{\ell}(z_1,o)| \leq L|z_1 - z_2|, \forall o \in \cO, z_1, z_2 \in \mathbb{R}
    \end{align}
    \item for every $o \in \cO$, $z \mapsto \widetilde{\ell}(z, o)$ is unimodal.
\end{itemize}
Then, for any measure $\mu$ on $\cO$, any $p \geq 1$, and $\epsilon > 0$, it holds that
\begin{align}
    \Nb(L \epsilon, \ell(\F), \|\cdot\|_{\mu, p}) \leq \Nb(\epsilon, \F, \|\cdot\|_{\mu, p}).
\end{align}
\end{lemma}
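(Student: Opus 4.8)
The plan is to turn any $\epsilon$-bracketing of $\F$ into an $L\epsilon$-bracketing of $\ell(\F)$ of the same cardinality, by pushing each bracket through $\widetilde{\ell}$. First I would take a minimal $(\epsilon,\|\cdot\|_{\mu,p})$-bracketing $\{[l^k,u^k]:k\in[N]\}$ of $\F$, so that every $f\in\F$ satisfies $l^k\le f\le u^k$ pointwise for some $k$ and $\|u^k-l^k\|_{\mu,p}\le\epsilon$. Writing $m(o)$ for the mode of the unimodal map $z\mapsto\widetilde{\ell}(z,o)$, let $\pi^k(o):=\min\{\max\{m(o),l^k(o)\},u^k(o)\}$ be the projection of the mode onto the bracket interval. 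I would then define the induced brackets on $\ell(\F)$ as the extrema over the three points $l^k(o),u^k(o),\pi^k(o)$:
\[
L^k(o):=\min\{\widetilde{\ell}(l^k(o),o),\widetilde{\ell}(u^k(o),o),\widetilde{\ell}(\pi^k(o),o)\},\qquad
U^k(o):=\max\{\widetilde{\ell}(l^k(o),o),\widetilde{\ell}(u^k(o),o),\widetilde{\ell}(\pi^k(o),o)\}.
\]
Because $\widetilde{\ell}(\cdot,o)$ is unimodal, its infimum and supremum over the compact interval $[l^k(o),u^k(o)]$ are attained either at an endpoint or at the clamped mode $\pi^k(o)$, so that $L^k(o)=\inf_{z\in[l^k(o),u^k(o)]}\widetilde{\ell}(z,o)$ and $U^k(o)=\sup_{z\in[l^k(o),u^k(o)]}\widetilde{\ell}(z,o)$. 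Consequently, whenever $l^k\le f\le u^k$ one has $L^k(o)\le\widetilde{\ell}(f(o),o)=\ell(f,o)\le U^k(o)$, so the $N$ pairs $[L^k,U^k]$ cover $\ell(\F)$.

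It then remains to control the bracket widths. Since $L^k(o)$ and $U^k(o)$ are each one of the three evaluations $\widetilde{\ell}(z,o)$ with $z\in\{l^k(o),u^k(o),\pi^k(o)\}\subseteq[l^k(o),u^k(o)]$, there are points $z_1(o),z_2(o)\in[l^k(o),u^k(o)]$ realizing $U^k(o)$ and $L^k(o)$, and the $L$-equi-Lipschitz property gives
\[
U^k(o)-L^k(o)=\widetilde{\ell}(z_1(o),o)-\widetilde{\ell}(z_2(o),o)\le L\,|z_1(o)-z_2(o)|\le L\,(u^k(o)-l^k(o)),
\]
the last step because both $z_1(o),z_2(o)$ lie in $[l^k(o),u^k(o)]$. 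Taking $\|\cdot\|_{\mu,p}$ on both sides yields $\|U^k-L^k\|_{\mu,p}\le L\|u^k-l^k\|_{\mu,p}\le L\epsilon$. Hence $\{[L^k,U^k]:k\in[N]\}$ is an $(L\epsilon,\|\cdot\|_{\mu,p})$-bracketing of $\ell(\F)$, and choosing $N=\Nb(\epsilon,\F,\|\cdot\|_{\mu,p})$ minimal gives $\Nb(L\epsilon,\ell(\F),\|\cdot\|_{\mu,p})\le\Nb(\epsilon,\F,\|\cdot\|_{\mu,p})$.

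The width estimate is essentially a one-line Lipschitz bound, so the real work—and the reason unimodality is assumed—is in making $L^k$ and $U^k$ legitimate measurable brackets rather than opaque interval extrema. Unimodality is precisely what reduces the interval $\inf$ and $\sup$ to a minimum and maximum over the three explicit functions $\widetilde{\ell}(l^k(\cdot),\cdot)$, $\widetilde{\ell}(u^k(\cdot),\cdot)$, and $\widetilde{\ell}(\pi^k(\cdot),\cdot)$, whose measurability follows from that of $l^k,u^k$ and of the mode $m$; the latter I would obtain from a standard measurable-selection argument for the mode of a Carathéodory unimodal integrand rather than reprove it. The only point worth double-checking is that the construction is symmetric in whether the mode is a minimum or a maximum, which is automatic here since $L^k$ and $U^k$ are the $\min$ and $\max$ over the same three evaluation points, so the single Lipschitz bound above covers both cases.
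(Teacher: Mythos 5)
Your proof is correct. Note that the paper itself does not prove this lemma --- it is imported verbatim as Lemma~4 of \citet{bibaut2019fast} --- and your argument is precisely the standard one behind such Lipschitz-plus-unimodal preservation results: push each bracket $[l^k,u^k]$ of $\F$ through $\widetilde{\ell}$, use unimodality (with continuity from the Lipschitz hypothesis) to identify the interval extrema with evaluations at the two endpoints and the clamped mode, and use the $L$-Lipschitz bound to control the resulting bracket width by $L(u^k-l^k)$ pointwise before taking the $\|\cdot\|_{\mu,p}$ norm. The only loose end you flag, measurability of the mode $m(\cdot)$, can in fact be dispensed with entirely: since $\widetilde{\ell}(\cdot,o)$ is continuous, $U^k(o)=\sup_z\widetilde{\ell}(z,o)$ and $L^k(o)=\inf_z\widetilde{\ell}(z,o)$ over $z\in[l^k(o),u^k(o)]$ coincide with the supremum and infimum over the countable set $\{l^k(o),u^k(o)\}\cup\bigl([l^k(o),u^k(o)]\cap\mathbb{Q}\bigr)$, which are measurable as countable extrema of measurable functions, so no measurable-selection argument is needed (unimodality then only serves to guarantee the width bound via the endpoint/mode characterization, which your Lipschitz step already handles for arbitrary $z_1,z_2$ in the interval).
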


\subsection{Proof of the variance bound under margin condition}

\begin{proof}[Proof of \cref{lemma:vb_from_margin_cond}]
By assumption there exists $f_1\in\F$ such that $R^*(f_1)=\E_{p_X}\mu^*(X)$. Applying \cref{asm:margin} with $u=0$ shows that we necessarily have $\abs{\argmin_{a\in\A}\mu(X,a)}=1$ almost surely. Therefore, almost surely, $f_1(X,a^*(X))=1$ and $f_1(X,a)=0$ for $a\neq a^*(X)$.

Now fix any $f\in\F$. Given $X$, let $A\in\A$ be random variable draw from $f(X,\cdot)$. We will henceforth denote expectations and probabilities as wrt $(X,A)\sim p_X\times f$. For brevity we will also denote $A^*=a^*(X)$.
Note that
$$
\|\ell(f,\cdot)-\ell(f_1,\cdot)\|_{2,g^*}^2\leq
M^2\Prb{A^*\neq A}
$$
and that 
$$
\|\Lambda\|_{2,g^*}^2
\left(\frac{R^*(f)-R^*(f_1)}{\|\Lambda\|_{2,g^*}}\right)^{\alpha}=M^2(\Eb{\mu(X,A)-\mu(X,A^*)}/M)^{\nu/(\nu+1)}.
$$
Denoting $\Delta=\min_{a\in\A\backslash\{a^*(X)\}}\mu(X,a)-\mu^*(X)$, \cref{asm:margin} says that for some $\kappa>0$ we have $\Prb{\Delta\leq u}\leq (\kappa u/M)^\nu$, where $1^\infty=1$ and $x^\infty=0$ for $x\in[0,1)$.

Fix $u>0$. Then
\begin{align}
\Eb{\mu(X,A)-\mu(X,A^*)}
&=\Eb{(\mu(X,A)-\mu(X,A^*))\Ind(A\neq A^*)}\\
&\geq\Eb{(\mu(X,A)-\mu(X,A^*))\Ind(A\neq A^*,\Delta>u)}\\
&\geq u\Prb{A\neq A^*,\Delta>u}\\
&= u\prns{\Prb{A\neq A^*}-\Prb{A\neq A^*,\Delta\leq u}}\\
&\geq u\prns{\Prb{A\neq A^*}-\Prb{\Delta\leq u}}\\
&\geq u\prns{\Prb{A\neq A^*}-(\kappa u/M)^\nu}.
\end{align}
Set $u=((\nu+1)\kappa/M)^{-1/\nu}\Prb{A\neq A^*}^{1/\nu}$ and obtain
$$
\Eb{\mu(X,A)-\mu(X,A^*)}\geq \nu (\nu+1)^{-(\nu+1)/\nu} (\kappa/M)^{-1}\Prb{A\neq A^*}^{(\nu+1)/\nu},
$$
whence
$$
\Prb{A\neq A^*}\leq \nu^{-\nu/(\nu+1)} (\nu+1) \prns{(\kappa/M)\Eb{\mu(X,A)-\mu(X,A^*)}}^{\nu/(\nu+1)}.
$$
We conclude that
$$
\|\ell(f,\cdot)-\ell(f_1,\cdot)\|_{2,g^*}^2\lesssim
M^2 \prns{\Eb{\mu(X,A)-\mu(X,A^*)}/M}^{\nu/(\nu+1)}
$$
as desired.
\end{proof}

\section{Additional Details and Results for the Empirical Investigation}

Here we provide additional details and results for \cref{sec:empirics}.

\subsection{Contextual Bandit Data from Multi-Class Classification Datasets}\label{sec:dataconstruction}

To construct our data, we turn $K$-class classification tasks into a $K$-armed contextual bandit problems \citep{dudik2014doubly,dimakopoulou2017estimation,su2019cab}, which has the benefits of reproducibility using public datasets and being able to make uncontroversial comparisons using actual ground truth data with counterfactuals. We use the public OpenML Curated Classification benchmarking suite 2018 (OpenML-CC18; BSD 3-Clause license) \citep{bischl2017openml}, which has datasets that
vary in domain, number of observations, number of classes and number of features. Among these, we select the classification datasets which have less than 60 features. This results in 51 classification datasets from OpenML-CC18 used for evaluation. 
\cref{tab:openml} summarizes the characteristics of the 51 OpenML datasets used.

\begin{table}[h!]
\centering
\begin{tabular}{|c|c|}
\hline
Samples & Count \\
\hline
$< 1000$  & 16  \\ 
\hline
$\geq 1000$ and $< 10000$ & 25  \\ 
\hline
$\geq 10000$ & 10 \\
\hline
\end{tabular}
\hspace{5pt}
\begin{tabular}{|c|c|}
\hline
Classes & Count \\
\hline  
$= 2$ & 30 \\
\hline
$> 2 \text{ and } < 10$  & 15  \\ 
\hline
$ \geq 10 $ & 6  \\
\hline
\end{tabular}
\hspace{5pt}
\begin{tabular}{|c|c|}
\hline
Features & Count \\
\hline  
$\geq 2 \text{ and } < 10$ & 14 \\
\hline
$\geq 10 \text{ and } < 30$  & 22  \\ 
\hline
$\geq 30 \text{ and } \leq 60$ & 14  \\
\hline
\end{tabular}
\vspace{5pt}
\caption{Characteristics of the 51 OpenML-CC18 datasets used for evaluation.}
\label{tab:openml}
\vspace{-\baselineskip}\end{table}

Each dataset is a collection of pairs of covariates $X$ and labels $L\in\{1,\dots,K\}$. We transform each dataset to the contextual bandit problem as follows. At each round, we draw $X_t,L_t$ uniformly at random with replacement from the dataset. We reveal the context $X_t$ to the agent, and given an arm pull $A_t$, we draw and return the reward $Y_t \sim \mathcal{N}(\textbf{1}\{A_t = L_t\}, 1)$. To generate our data, we set $T=100000$ and use the following $\epsilon$-greedy procedure. We pull arms uniformly at random until each arm has been pulled at least once. Then at each subsequent round $t$, we fit $\widehat{\mu}_{t-1}$ using the data up to that time.
Specifically, for each $a$, we take the data $\{(X_s,Y_s):1 \leq s \leq t-1, A_s = a\}$ and pass it to a regression algorithm in order to construct $\widehat{\mu}_{t-1}(\cdot,a)$. 
In \cref{sec:empirics}, we presented results where we use \verb|sklearn|'s \verb|LinearRegression| to fit $\widehat{\mu}_{t-1}(\cdot,a)$ (using \verb|sklearn| defaults).
In \cref{apdx:empirics}, we repeat the experiments where we instead use \verb|sklearn|'s \verb|DecisionTreeRegressor| (using \verb|sklearn| defaults).
We set $\tilde A_t(x)=\argmax_{a=1,\dots,K}\widehat{\mu}_{t-1}(a,x)$ and $\epsilon_t=t^{-1/3}$. We then let $g_t(a\mid x)=\epsilon_t/K$ for $a\neq \tilde A_t(x)$ and $g_t(\tilde A_t(x)\mid x)=1-\epsilon_t+\epsilon_t/K$. That is, with probability $\epsilon_t$ we pull a random arm, and otherwise we pull $\tilde A_t(X_t)$.

\subsection{Additional Results}
\label{apdx:empirics}

In \cref{sec:empirics}, we presented results where we use a linear-contextual $\epsilon$-greedy bandit algorithm to collect the data.
Here, we repeat our experiments when the data are instead collected by a tree-contextual $\epsilon$-greedy bandit algorithm, as described in \cref{sec:dataconstruction} above.
The results are shown in \cref{fig:mse-treebandit}. The conclusions are generally the same: ISWERM compares favorably for fitting linear models, while all methods perform similarly for fitting tree models.


\begin{figure}
    \centering
\subfloat[LASSO outcome model with cross-validated regularization parameter.]{\includegraphics[width=1\linewidth]{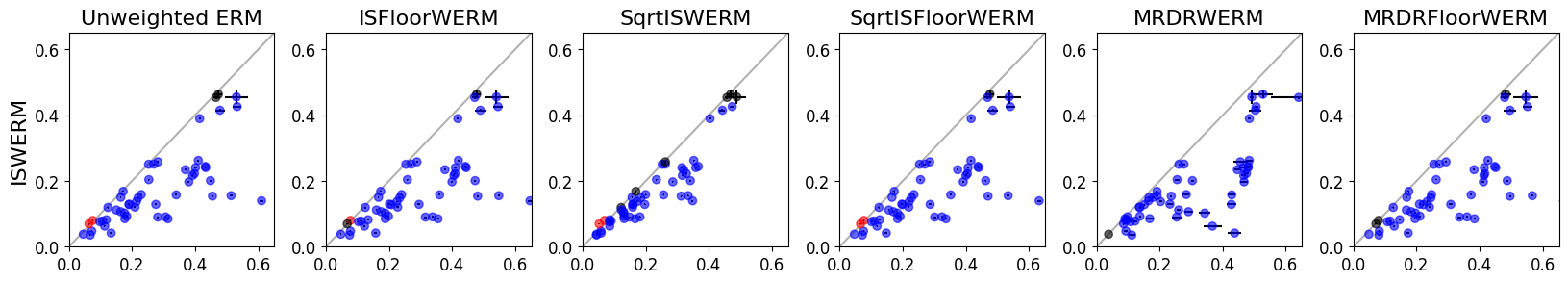}} \\
\subfloat[Ridge outcome model with cross-validated regularization parameter.]{\includegraphics[width=1\linewidth]{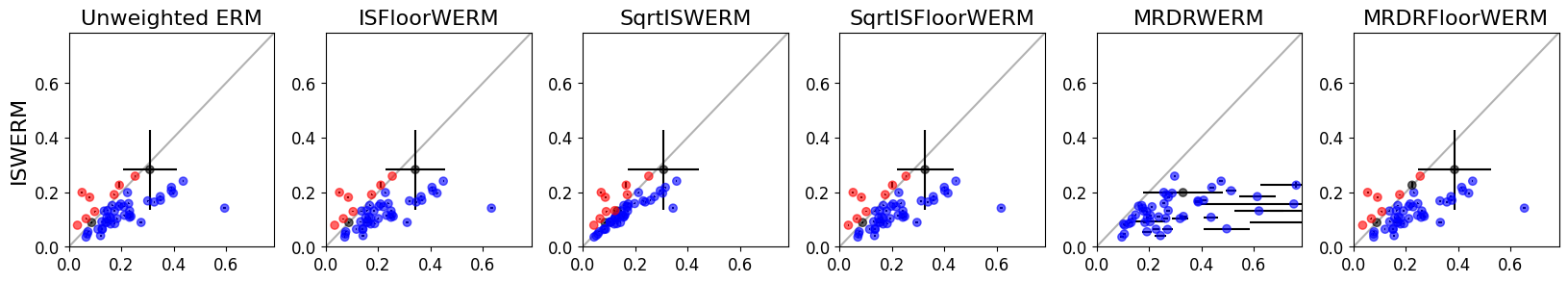}} \\
\subfloat[CART outcome model with unrestricted tree depth.]{
\includegraphics[width=1\linewidth]{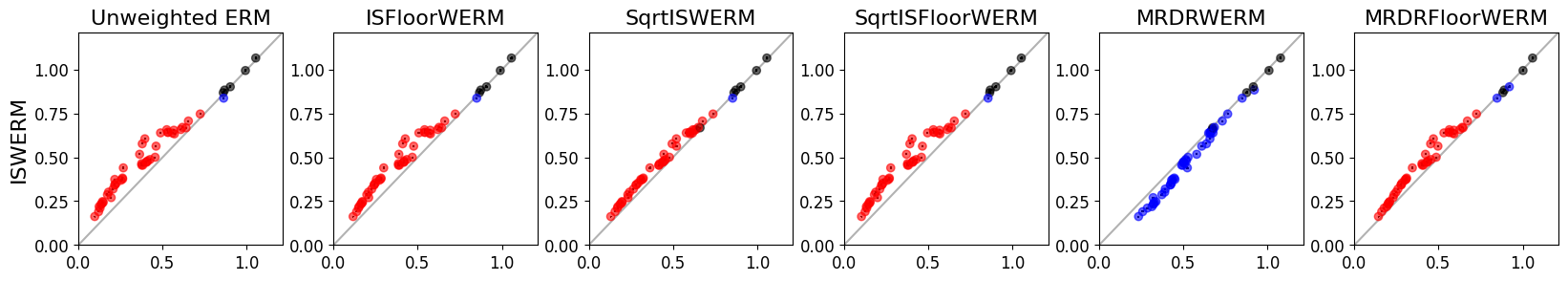}}
  \caption{Comparison of weighted regression run on contextual-bandit-collected data. Each dot is one of 51 OpenML-CC18 datasets. Lines denote $\pm1$ standard error. Dots are blue when ISWERM is clearly better, red when clearly worse, and black when indistinguishable within one standard error.}
    \label{fig:mse-treebandit}
\end{figure}

\subsection{Code and Execution Details}
\label{code}
The IPython notebook to reproduce the experimental results of the main paper and the appendix is included as an attachment in the Supplemental Material. One needs to obtain an OpenML API key to run this code (instructions can be found at https://docs.openml.org/Python-guide/) and replace the string \verb|'YOURKEY'| in \verb|summarize_openmlcc18()| and in \verb|download_openmlcc18()| functions with it.
After that, if the notebook is executed as is, it reproduces Figure \ref{fig:mse-linearbandit} (38h 26min on a single Intel Xeon machine with 32 physical cores/64 CPUs). Changing variable  \verb|bandit_model| from \verb|'linear'| to \verb|'tree'| reproduces Figure \ref{fig:mse-treebandit} (56h 45min on a single Intel Xeon machine with 32 physical cores/64 CPUs).
\end{document}